%% file: main.tex
\documentclass[twocolumn,10pt]{article}[2019/10/25]
\usepackage{graphicx} 
\RequirePackage{geometry}[2018/04/16]
\usepackage{hyperref}
\usepackage{amsmath}
\usepackage{amssymb}
\usepackage[mathscr]{euscript}
\usepackage{tikz}
\usetikzlibrary{cd,positioning,calc,decorations.pathmorphing,shapes}

\usepackage{titling}
\usepackage{natbib}
\usepackage{mathtools}
\usepackage{booktabs}
\usepackage{tikz}
\usepackage{adjustbox}
\usepackage[linesnumbered,ruled]{algorithm2e}

\renewenvironment{abstract}{
    \begin{adjustbox}{minipage=\linewidth-0.5in,center}
    \centerline{\large\bfseries Abstract}
    \vspace{2\baselineskip}
}{
    \end{adjustbox}
}
\RequirePackage[inline]{enumitem}[2019/06/20]
    \setlist[1,2,3,4,5,6]{
        labelsep=5pt,
        leftmargin=2em,
        topsep=4pt plus 1pt minus 2pt,
        partopsep=1pt plus 0.5pt minus 0.5pt,
        itemsep=2pt plus 1pt minus 0.5pt,
        parsep=2pt plus 1pt minus 0.5pt
    }
    \setlist[2,3,4,5,6]{
        topsep=2pt plus 1pt minus 0.5pt,
        itemsep=1pt plus 0.5pt minus 0.5pt,
        parsep=1pt plus 0.5pt minus 0.5pt,
    }
    \setlist[3,4,5,6]{
        leftmargin=1.5em,
        topsep=1pt plus 0.5pt minus 0.5pt,
        partopsep=0.5pt plus 0pt minus 0.5pt,
        parsep=0pt
    }
    \setlist[4,5,6]{leftmargin=1em}
    \setlist[5,6]{leftmargin=0.5em}
    \setlist[6]{leftmargin=0.5em} 
\RequirePackage{microtype}[2019/11/18]
\RequirePackage{courier}[2005/04/12]

\RequirePackage[overload]{textcase}
\RequirePackage[bf,raggedright,small,pagestyles]{titlesec}[2019/10/16]
    \titleformat*{\section}{\raggedright\large\bfseries\MakeUppercase}
    \titleformat*{\subsection}{\raggedright\bfseries\MakeUppercase}
    \titlespacing{\section}{0pt}{*2.75}{*1.375}
    \titlespacing{\subsection}{0pt}{*2.75}{*1.375}
    \titlespacing{\paragraph}{0pt}{*1.375}{1em}

\newcommand{\gitlink}[0]{https://github.com/WillemSch/Causal-Embeddings-and-the-Multi-Resolution-Marginal-Problem}

\input{Commands}

\title{Multi-Level Causal Embeddings}
\author{Willem Schooltink\\ \href{mailto:<willem.schooltink@uib.no>?Subject=Multi-Level Causal Embeddings}{\smaller willem.schooltink@uib.no} \and Fabio Massimo Zennaro\\ \href{mailto:<fabio.zennaro@uib.no>?Subject=Multi-Level Causal Embeddings}{\smaller fabio.zennaro@uib.no}} 
\date{\vspace{-.5em}Department of Informatics, University of Bergen
\vspace{3em}}

\begin{document}
\maketitle

\begin{abstract}

\input{Sections/0-Abstract}
\end{abstract}

\input{Sections/1-Introduction}

\input{Sections/1.5-Related-Works}

\input{Sections/2-Preliminaries}

\input{Sections/3-Embeddings}

\input{Sections/4-Marginal-Problem}

\input{Sections/5-Discussion}

\vspace{1em}
\textbf{Acknowledgements}

We thank Adèle Ribeiro for helping in highlighting the projection mechanism, allowing for neater definitions and illustrating a close connection between this work and CDAGs.

\bibliography{refs}
\newpage

\onecolumn
\title{Multi-Level Causal Embeddings\\(Supplementary Material)}
\author{}
\date{}
\maketitle

\appendix
\input{Sections/A-Appendix}
\end{document}

%% file: Commands.tex
\usepackage{amsmath}
\usepackage{amssymb, amsthm}
\usepackage{times}
\usepackage[mathscr]{euscript}
\usepackage{tikz}
\usetikzlibrary{cd,positioning,calc,decorations.pathmorphing,shapes}

\newcommand{\model}[0]{\ensuremath{\mathcal{M}}}
\newcommand{\boldalpha}[0]{\ensuremath{\boldsymbol{\alpha}}}

\newcommand{\relevant}[0]{\ensuremath{\mathbf{R}}}
\newcommand{\relevanttarget}[0]{\ensuremath{\mathbf{S}}}

\newcommand{\vars}[0]{\ensuremath{\mathbf{V}}}

\newcommand{\exos}[0]{\ensuremath{\mathbf{U}}}

\newcommand{\dists}[0]{\ensuremath{P(\exos)}}
\newcommand{\funcs}[0]{\ensuremath{\mathcal{F}}}
\newcommand{\range}[0]{\ensuremath{\mathcal{R}}}

\newcommand{\lone}[0]{\ensuremath{\mathcal{L}_1}}
\newcommand{\ltwo}[0]{\ensuremath{\mathcal{L}_2}}
\newcommand{\li}[0]{\ensuremath{\mathcal{L}_i}}
\newcommand{\alphabs}[0]{$\boldsymbol{\alpha}$-abstraction}\newcommand{\xdasharrow}[2][->]{
    \tikz[baseline=-\the\dimexpr\fontdimen22\textfont2\relax]{
        \node[anchor=south,font=\scriptsize, inner ysep=1.5pt,outer xsep=2.2pt](x){#2};
        \draw[shorten <=3.4pt,shorten >=3.4pt,dashed,#1](x.south west)--(x.south east);
    }
}
\newcommand{\confarrow}[0]{\xdasharrow[<->]{\;\;\;\;\;}}

\makeatletter
\providecommand{\leftsquigarrow}{%
  \mathrel{\mathpalette\reflect@squig\relax}%
}
\newcommand{\reflect@squig}[2]{%
  \reflectbox{$\m@th#1\rightsquigarrow$}%
}
\makeatother

\newcounter{sarrow}
\newcommand\medconfarrow[0]{%
\stepcounter{sarrow}%
\mathrel{\begin{tikzpicture}[baseline= {( $ (current bounding box.south) + (0,-0.5ex) $ )}]
\node[inner sep=.5ex] (\thesarrow) {$\;\;\;$};
\path[draw,<->,decorate,
  decoration={zigzag,amplitude=0.7pt,segment length=1.2mm,pre=lineto,pre length=4pt, post length=4pt}] 
    (\thesarrow.south east) -- (\thesarrow.south west);
\end{tikzpicture}}%
}

\newtheorem{theorem}{Theorem}
\newtheorem{lemma}[theorem]{Lemma}

\newtheorem{definition}{Definition}
\newtheorem{example}{Example}
\newtheorem{remark}{Remark}

\usepackage{array}   
\newcolumntype{L}{>{$}l<{$}} 
\newcolumntype{R}{>{$}r<{$}} 
\newcolumntype{C}{>{$}c<{$}} 

%% file: Sections/0-Abstract.tex
Abstractions of causal models allow for the coarsening of models such that relations of cause and effect are preserved. Whereas abstractions focus on the relation between two models, in this paper we study a framework for \textit{causal embeddings} which enable multiple detailed models to be mapped into sub-systems of a coarser causal model. We define causal embeddings as a generalization of abstraction, and present a generalized notion of consistency. By defining a \textit{multi-resolution marginal problem}, we showcase the relevance of causal embeddings for both the statistical marginal problem and the causal marginal problem; furthermore, we illustrate its practical use in merging datasets coming from models with different representations. 

%% file: Sections/1-Introduction.tex
\section{Introduction}
Causality enables us to reason about real-world systems on a level beyond statistics, allowing us to answer questions on the effects of interventions and hypotheticals. Such queries naturally show in many fields, such as medicine, biology or economics, where studying correlation is not enough. The formalism of Structural Causal Models (SCMs) \citep{pearl2000models} rigorously captures reasoning about observations, interventions and counterfactuals (hypotheticals). 

However, causal models of real-world systems often grow very large to a point where reasoning becomes impractical, as SCMs, based on Directed Acyclic Graphs (DAGs), do not scale very well. One solution to the issue is to work with models at a coarser resolution. For example, when modeling the wildlife populations in a forest, we may have data on all subspecies of deer and rodents present, but for our purposes we may only care about the combined population of all deer subspecies. In such cases we can use \textit{causal abstraction} to describe how a detailed (low-level) model maps to a coarser (high-level) model, whilst preserving causal relations when merging variables and the values they take. Such frameworks of causal abstraction \citep{rischel2020category,Beckers_Halpern_2019,beckers2020approximate} have rigorous mathematical foundations, and provide measures to evaluate consistency among the models. 

Another solution to deal with large SCMs is to reason about sub-systems. In such cases we wish to map, or \textit{embed}, the detailed models into sub-parts of a coarse model. Fig.\ref{fig:visual-emb-vs-abs} illustrates how embeddings compare against abstractions. While abstraction deals with one-to-one mappings of models, embeddings deal with the common challenge in the sciences of having a high-level model of a system, for example an overarching climate model, and a combination of low-level sub-models, each describing only a part of the global model.

\begin{figure}
    \centering
    \input{Figures/abs-vs-emb}
    \caption{A visual comparison between abstractions (left) and embeddings (right). Note that abstractions (blue) have mappings to all variables in the high-level model, whereas the embeddings (orange) provides a fine-grained description only of the sub-system $X\rightarrow Y$.}
    \label{fig:visual-emb-vs-abs}
\end{figure}
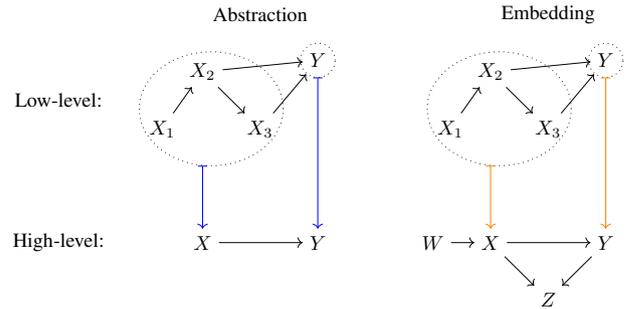

\paragraph{Contribution.}
In this paper, we extend the idea of abstractions: whereas abstractions describe how an entire high-level SCM can be described by a more detailed low-level SCM, we study \textit{causal embeddings}, describing how sub-systems of a high-level SCM can be described by detailed low-level models. We will (i) illustrate how this new point of view allows us to define a high-level causal model as the combination of multiple low-level sub-system models, (ii) discuss the graphical and functional consistency of causal embeddings, (iii) discuss theoretical applications of causal embeddings in a multi-level version of the marginal problem, and (iv) show the use of causal embeddings as a tool to merge overlapping datasets with differing levels of detail.

%% file: Figures/abs-vs-emb.tex
    \resizebox{\linewidth}{!}{
        \begin{tikzpicture}
            \node (M) at (-1.5,.5) {Low-level:};
            \node (abs) at (2,2) {Abstraction};
            \node (abs) at (7,2) {Embedding};
            \node (X1) at (0.3,0) {$X_1$};
            \node (X2) at (1,1) {$X_2$};
            \node (X3) at (2,0) {$X_3$};
            \node (Y) at (3,1.2) {$Y$};
    
            \draw[->] (X1) -- (X2);
            \draw[->] (X2) -- (X3);
            \draw[->] (X3) -- (Y);
            \draw[->] (X2) -- (Y);
    
            \node (M) at (-1.5,-2) {High-level:};
            \node (Xp) at (1,-2) {$X$};
            \node (Yp) at (3,-2) {$Y$};
    
            \draw[->] (Xp) -- (Yp);
    
            \draw[dotted] (1.15, .35) ellipse (1.25 and 1);
            \draw[dotted] (Y) ellipse (.3 and .3);
            \draw[|->, blue] (1, -.65) -- (Xp);
            \draw[|->, blue] (3,.9) -- (Yp);

            \node (X1emb) at (5.3,0) {$X_1$};
            \node (X2emb) at (6,1) {$X_2$};
            \node (X3emb) at (7,0) {$X_3$};
            \node (Yemb) at (8,1.2) {$Y$};
    
            \draw[->] (X1emb) -- (X2emb);
            \draw[->] (X2emb) -- (X3emb);
            \draw[->] (X3emb) -- (Yemb);
            \draw[->] (X2emb) -- (Yemb);
    
            \node (Xpemb) at (6,-2) {$X$};
            \node (Wpemb) at (5,-2) {$W$};
            \node (Ypemb) at (8,-2) {$Y$};
            \node (Zpemb) at (7,-3) {$Z$};
    
            \draw[->] (Xpemb) -- (Ypemb);
            \draw[->] (Wpemb) -- (Xpemb);
            \draw[->] (Xpemb) -- (Zpemb);
            \draw[->] (Ypemb) -- (Zpemb);
    
            \draw[dotted] (6.15, .35) ellipse (1.25 and 1);
            \draw[dotted] (Yemb) ellipse (.3 and .3);
            \draw[|->, orange] (6, -.65) -- (Xpemb);
            \draw[|->, orange] (8,.9) -- (Ypemb);
        \end{tikzpicture}
    }

%% file: Sections/1.5-Related-Works.tex
\subsection{Related works}
Causal abstractions can be described as SCM-to-SCM mappings; two frameworks have been proposed: the category theoretical $\boldalpha$ framework \citep{rischel2020category, rischel2021compabstraction} and the $\tau$-$\omega$ framework \citep{rubenstein2017causal, beckers2019abstracting,beckers2020approximate}. Alternatively, abstractions can be described as DAG-to-DAG mappings from one causal graph to another \citep{anand2023causal}. The relations between these abstraction frameworks have been studied by \citet{schooltink2025aligning}. The closest work to our proposal is \citet{otsuka2022equivalence}, who adopts a non-surjective definition of abstraction in their proposed $\phi$ framework. However, our definition is more flexible and extends to the case of embedding multiple low-level models into a single high-level model. 

Embeddings have relation to both the statistical \citep{kellerer1964masstheoretische} and causal marginal problem \citep{causal-marginal-problem}. Given multiple marginal SCMs \citet{causal-marginal-problem} proposes a method to generate a family of compatible joint SCMs, and through falsification find those that are counterfactually consistent. We show that embeddings can be used to tackle a multi-resolution version of the causal marginal problem. Relatedly, \citet{mejia2022obtaining} discusses how SCMs can be learned for overlapping statistical datasets under certain assumptions, while the Integration of Overlapping Datasets algorithm \citep{pmlr-v15-tillman11a} proposes a sound method to merge causal datasets and learn an equivalence class of graphs representing the data generation. This work was extended to a more special case by \citet{bang2025constraint} exploiting potential knowledge of variable ordering. 

Finally, embeddings can be used to merge causal datasets. Causal abstractions has already been used before to merge datasets in order to improve statistical power \citep{zennaro2023jointly,felekis2024causal}, although limited to transporting data from one model to another one. In the causal literature, \citet{janzing2018merging} shows how causal models can help in merging overlapping statistical datasets, while exploiting data from different models has been studied in the context of data fusion and transportability \citep{BareimboimPearl2016datafusion,pearl2022external}, including the case of multiple environments \citep{bareinboim2013meta}; these approaches, however, do not establish an explicit relation between SCMs, but rely on graphical calculus to take best advantage of observational and interventional data from a source and target model. 

%% file: Sections/2-Preliminaries.tex
\section{Preliminaries}
We first introduce the necessary background, specifically we present SCMs, Pearl's Causal Hierarchy, causal abstraction, and the causal marginal problem.

\paragraph{Notation.}
Throughout we will have a set of variables represented using bold uppercase \vars, and a specific variable using regular uppercase with index subscript when necessary $V_i\in\vars$. The value of a set of variables is indicated by bold lowercase $\mathbf{v}$, and the value of a single variable using regular lowercase $v_i$, again with index subscript when necessary. Additionally, a distribution over a variable $V$ or set of variables $\vars$ is denoted as $P(V)$ and $P(\vars)$, respectively.

\subsection{Causality}
\paragraph{Structural Causal Models.}
SCMs are formal descriptions of causal models, specifying causal variables and effects among them \citep{pearl2000models}. We define an SCM as follows:

\begin{definition}[Structural Causal Model]\label{def:SCM}
    An SCM is a 4-tuple $\model:\langle\exos,\vars,\funcs,\dists\rangle$, with:
    \begin{itemize}
        \item $\exos$: a set of unobservable (exogenous) variables, which can take values in the range $\range(\exos)$,
        \item $\vars$: a set of observable (endogenous) variables, which can take values in the range $\range(\vars)$,
        \item $\funcs$: a collection of functions determining the value of the endogenous variables $\vars$, such that for each $V\in\vars$ there exists a function $f_V(Pa_V,\exos_V)$ with $Pa_V\subseteq \vars \setminus V \text{ and }\exos_V\subseteq \exos,$
        \item $\dists$: a probability distribution over the exogenous variables $\exos$.
    \end{itemize}
\end{definition}

An SCM $\model$ entails a unique directed graph $G:\langle\vars,\mathbf{E}\rangle$ with the vertices given by the endogenous variables $\vars$ and an edge $V_i\rightarrow V_j \in \mathbf{E}$ if the value of $V_j$ depends on $V_i$. Specifically, $V_i\rightarrow V_j \in \mathbf{E}$ if $V_i \in Pa_{V_j}$. Thus the set $Pa_{V_j}$ is the set of parents of $V_j$ in a graph theoretical sense. Additionally, whenever two variables $V_i, V_j$ share an unobservable parent (confounder) $U\in\exos$ we add a dashed bidirected arrow $V_i\confarrow V_j$. We assume the SCMs to have no cyclic relations: the graph entailed by the SCM is acyclic (DAG). Notice that the bidirected arrow is a shorthand for $V_i\leftarrow U \rightarrow V_j$; hence the graph $G$ remains acyclic.

Causal models allow for reasoning about causality through interventions. In this paper we consider \emph{hard interventions}:

\begin{definition}[Intervention]\label{def:intervention}
Given a causal model $\model:= \langle\vars,\exos,\funcs,\dists\rangle$ an intervention on a variable $X\in\vars$, denoted as $do(X = x)$, is the replacement of the function $f_{X}\in\funcs$ with a constant function $f'_{X} = x$.
\end{definition}

As is common, we will apply the $do$-operator over sets of variables $\mathbf{X}\subseteq\vars$, implying the replacement of the functions $f_X\in\funcs$ for all $X\in\mathbf{X}$. In essence, when intervening on a variable $X$ its value is no longer dependent on its parents but forcibly set to some value $x$. Consequently, as the DAG implied by an SCM has edges determined by the functions $\funcs$, interventions have graphical implications: specifically, an intervention $do(X)$ removes all edges coming into $X$ in the DAG.

\subsection{Pearl's Causal Hierarchy.}
In causal reasoning there are three distinct types of questions one may wish to answer: (i) given the observation $X$ what can be said about $Y$, (ii) if $X$ is set to $X=x$ what can be said about $Y$, (iii) given  observations $Y=y$ and $X=x$ what can be said about $Y$ if $X$ had been set to $X=\hat{x}$? These questions can be categorized as (i) observing $\lone$, (ii) acting $\ltwo$, and (iii) imagining $\mathcal{L}_3$. This defines the Pearl's Causal Hierarchy (PCH) \citep{pearl2000models,Bareinboim2020hierarchy}, where each layer subsumes the previous layer, but cannot be reduced to the previous one: a query at layer $\li$ can be answered at layer $\mathcal{L}_{i+1}$, but a query in layer $\mathcal{L}_{i+1}$ cannot in general be reduced to layer $\li$. Fully specified SCMs allow for reasoning about counterfactuals $\mathcal{L}_3$.  

\paragraph{Notation.} For generalization, we will use the $\li$ operator in distributions: instead of $P(Y\vert X)$ we write $P(Y\vert \lone(X))$, and instead of $P(Y\vert do(X))$ we write $P(Y\vert \ltwo(X))$. We will not consider $\mathcal{L}_3$ quantities in this work. 

\subsection{Causal Abstractions}
Causal abstractions provide tools that allow us to map a detailed low-level causal model to a coarser high-level model. Similar to how causal models have a functional and graphical side (SCMs and DAGs, respectively), abstractions can be defined on both the functional and graphical level.

\paragraph{Functional.} 
We first consider the functional side through the framework of the \alphabs{} \citep{rischel2020category}.

\begin{definition}[$\alpha$-abstraction]\label{def:alpha-abs}
    Let $\model: \langle \vars_{\model},\exos_{\model},\funcs_{\model},$ $P(\exos_{\model})\rangle$ and $\mathcal M':\langle \vars_{\model'},\exos_{\model'},\funcs_{\model'},P(\exos_{\model'})\rangle$ be two SCMs, then an \alphabs{} $\boldsymbol{\alpha}: \mathcal M \rightarrow \mathcal M'$ is given by a 3-tuple $\langle \mathbf R, \varphi, \alpha_{V'} \rangle$ with:
    \begin{enumerate}
        \item $\mathbf R\subseteq \mathbf V_\mathcal M$ is a subset of relevant variables in $\model$.
        \item $\varphi: \mathbf R \rightarrow \mathbf V_{\model'}$ is a surjective map from the relevant variables to the variables of $\model'$.
        \item $\alpha_{V'}: \range(\varphi^{-1}(V')) \rightarrow \range(V')$, for each $V' \in \mathbf{V}_{\model'}$, is a surjective function from the range of the variables in the pre-image $\varphi^{-1}(V') \subseteq \vars_\model$ in $\mathcal M$ to the range of the variable $V'$ in $\mathcal M'$.
    \end{enumerate}
\end{definition}

Importantly, the application of $\alpha_\vars$ to a distribution is given by the pushforward $\alpha_\vars\left[P(\vars)\right] = {\alpha_\vars}_{\#}(P)(\vars)$.

\alphabs{}s do not enforce any form of causal consistency between a base model and an abstracted model; any mapping, as long as surjective, is permissible. For this reason, an error measure is used to define how much an abstracted model agrees with a base model. For \alphabs s the error measure is defined as follows: 

\begin{definition}[$\li$-Abstraction error]\label{def:func-consistency}
    Let $\boldalpha:\model\rightarrow\model'$ be an $\boldalpha$-abstraction and $\mathbf{X}',\mathbf{Y}' \subseteq \vars_{\model'}$, its $\li$-error is given by the distance or divergence $D$ between the distribution obtained by first abstracting and then evaluating: $${\color{blue}P_{\model'}(\mathbf Y'\:|\:\alpha_{\mathbf{X}'}[\li(\varphi^{-1}(\mathbf{X}'))])},$$ and that obtained by first evaluating and then abstracting: $${\color{orange}\alpha_{\mathbf{Y}'}\left[P_{\model}(\varphi^{-1}(\mathbf Y')\:|\: \li(\varphi^{-1}(\mathbf X')))\right]},$$ and taking the maximum over all $\mathbf{X}', \mathbf{Y}' \subseteq \vars_{\model'}$ as the error. Visually this corresponds to the maximum distance between the paths in blue and orange over all diagrams of the following form:
    \begin{center}
        \begin{tikzpicture}
            \node (Y) at (0,0) {$\varphi^{-1}\left(\mathbf{Y}'\right)$};
            \node (YX) at (5,0) {$\varphi^{-1}\left(\mathbf{Y'}\right)\:\vert\: \li\left(\varphi^{-1}(\mathbf{X}')\right)$};
            \node (Yp) at (0,-1.5) {$\mathbf{Y}'$};
            \node (YpXp) at (5,-1.5) {$\mathbf{Y}'\:\vert\: \li(\mathbf{X}')$};
    
            \draw[->, blue] (Y) -- (YX) node [midway, above] {\small$\li(\varphi^{-1}(\mathbf X'))$};
            \draw[->, orange] (Yp) -- (YpXp) node [midway, below] {\small$\li(X')$};
            \draw[->, orange] (Y) -- (Yp) node [midway, left] {\small$\alpha_{\mathbf{Y}'}$};
            \draw[->, blue] (YX) -- (YpXp) node [midway, right] {\small$\alpha_{\mathbf{X}'}$};
        \end{tikzpicture}
    \end{center}
    or formally by the following equation:
    \begin{multline}
            \hspace{-4pt}e_{\li}(\boldsymbol{\alpha})=
            \max_{\mathbf X', \mathbf Y'\subseteq \vars_{\model'}} D
            \left({\color{blue}P_{\model'}(\mathbf Y'|\alpha_{\mathbf{X}'}[\li(\varphi^{-1}(\mathbf{X}'))])},\right. \\ 
            \left.{\color{orange}\alpha_{\mathbf{Y}'}\left[P_{\model}(\varphi^{-1}(\mathbf Y')\:|\: \li(\varphi^{-1}(\mathbf X')))\right]}\right)
    \end{multline}
\end{definition}

An abstraction is \emph{$\li$-consistent} if its $\li$-error is zero.

\paragraph{Graphical.}
On the graphical level abstractions are defined as relations between the graphs of causal models. We will consider the framework of the Cluster DAG (CDAG), as introduced by \citet{anand2023causal}.

\begin{definition}[Cluster DAG]
    Let $\model$ and $\model'$ be two causal models admitting DAGs $G_{\model}:\langle\mathbf V_{\model}, \mathbf E_{\model}\rangle$ and $G_{\model'}:\langle \mathbf V_{\model'}, \mathbf E_{\model'}\rangle$, respectively; let $\varphi:\vars_{\model}\rightarrow\vars_{\model'}$ be a surjective map from the variables of $\model$ to the variables of $\model'$. 
    $G_{\model'}$ is a Cluster DAG of $G_{\model}$ if:
    \begin{enumerate}
        \item A directed edge $V_i' \rightarrow V_j'$ is in $E_{\model'}$ iff there exists an edge $V_n \rightarrow V_m \in E_{\model}$ such that $\varphi(V_n) = V_i'$ and $\varphi(V_m) = V_j'$.
        \item A bidirected edge $V_i' \confarrow V_j'$ is in $E_{\model'}$ iff there exists an edge $V_n \confarrow V_m \in E_{\model}$ such that $\varphi(V_n) = V_i'$ and $\varphi(V_m) = V_j'$.
    \end{enumerate}
\end{definition}

While graphical models do not specify distributions and functions explicitly they do entail constraints on distributional (in)equalities. For example an edge $X\rightarrow Y$ implies $P(Y\vert X)\neq P(Y)$. We denote $\mathcal{G}^{\li}(G)$ the set of algebraic constraints on $\li$ distributions implied by the causal graph $G$. We can then define a notion of consistency for graphical abstractions similar to that in functional abstractions.

\begin{definition}[Graphical $\li$-Consistency]\label{def:graph-consistency}
    Let $G_{\model}$ be the causal graph induced by SCM $\model$, and $G_{\model'}$ an abstraction of $G_{\model}$ with the variable map $\varphi:\vars_{\model}\rightarrow\vars_{\model'}$. Let us define $\mathcal{G}^{\li}(G_{\model'}^{-1})$ the set of all the constraints obtained from $\mathcal{G}^{\li}(G_{\model'})$ substituting each variable $V'\in\vars_{\model'}$ with the respective pre-image $\varphi^{-1}(V')$. Then $G_{\model'}$ is graphically \li-consistent with $G_{\model}$ iff
    \begin{equation}
        \mathcal{G}^{\li}\left(G_{\model'}^{-1}\right) \subseteq \mathcal{G}^{\li}\left(G^{ }_{\model^{ }}\right). 
    \end{equation}
\end{definition}

Graphical consistency and functional consistency align as follows: a graphically $\li$-consistent abstraction from $G_{\model}$ to $G_{\model'}$ implies the existence of an SCM $\model'$ that is functionally $\li$-consistent with a given $\model$; a functionally $\li$-consistent abstraction implies graphical $\li$-consistency only if all causal dependencies are preserved (see for more details \citet{schooltink2025aligning}).

\subsection{(Causal) Marginal Problem}
We will show applications of our work in the marginal problem: the challenge of finding a joint probability given two or more separate but overlapping datasets \citep{kellerer1964masstheoretische}, for example estimating the joint distribution $\hat{P}(X,Y,Z)$ from the two distributions $\hat{P}(X,Y)$ and $\hat{P}(Y,Z)$.

\begin{definition}[Marginal Problem]\label{def:marginal-problem}
    Given \emph{marginal} datasets $\mathcal{X}_{\vars_1},\dots,\mathcal{X}_{\vars_n}$, defined respectively over variables $\vars_i$, with possibly non-empty intersections of variables between datasets: $\vars_i \cap\vars_j \neq \emptyset$, find the joint distribution over the union $\bigcup_{i=1}^{n} \vars_i$.
\end{definition}

Overlap between the variables of the different datasets is a necessary condition in order to find a meaningful solution to the marginal problem. To illustrate this, consider the counter example where we have two datasets that do not overlap at all; this gives no information on dependencies and as such on the joint distribution of their variables.

The marginal problem per Def.\ref{def:marginal-problem} is a statistical problem: a question of combining observational distributions. We will focus on the causal extension of the marginal problem \citep{causal-marginal-problem}: finding a joint SCM from multiple overlapping marginal SCMs. Whereas in the statistical setting the object of interest is distributions, in the causal setting it is SCMs. This distinction is required as observational distributions alone cannot fully describe causality.

\begin{definition}[Causal Marginal Problem]\label{def:causal-marginal-problem}
    Given SCMs $\model_1, \dots, \model_n$, with possibly non-empty intersections of endogenous variables between models $\vars_{\model_i} \cap\vars_{\model_j} \neq \emptyset$, find the space of joint causal models $\model^*$ over the union $\bigcup_{i=1}^n \vars_{\model_i}$ consistent with the models $\model_1, \dots, \model_n$.
\end{definition}

Our proposed causal embeddings will tackle the more complicated setting of the causal marginal problem where the overlapping variables do not share the same level of detail.

%% file: Sections/3-Embeddings.tex
\section{SCM Projections}
In this section we will formalize a notion of projections for SCMs which will be instrumental in defining embeddings. First, recall that SCMs consist of a set of observable variables $\vars$ and a set of unobservable variables $\exos$. One may want to move some variables $V\subset\vars$ to the set of exogenous variables $\exos$, perhaps to simplify the model or since some variables have become unobservable.
Formally we can describe a projection of SCMs as follows:
\begin{definition}[SCM Projection]\label{def:SCM-projection}
    Let $G_{\model}$ be the graph induced by an SCM $\model$, $\relevant\subseteq \vars_{\model}$ a set of relevant variables to be preserved, $\model'$ an SCM over the relevant variables s.t. $\vars_{\model'} = \relevant$, and $G_{\model'}$ the graph induced by $\model'$. $G_{\model'}$ is a projection of $G_{\model}$ iff $G_{\model'}$ is graphically $\ltwo$-consistent with $G_{\model}$ with $\varphi$ the identity mapping.
\end{definition}

Importantly, SCM projections enforce only graphical constraints. The SCM projection $\model'$ of an SCM $\model$ implies the compatibility of causal (in)dependencies; that is, if there is a causal arrow $X\rightarrow Y$ in $\model$ and both $X$ and $Y$ are preserved, then there must exist an arrow $X\rightarrow Y$ in the graph induced by $\model'$. 
This definition follows as an extension of latent structure projections \citep{pearl1995projection} (see App.\ref{app:latent}) to the domain of SCMs. 

\section{Embeddings}
We will now define embeddings as a generalization of abstractions and similarly provide the notions of functional and graphical consistency. First, recall that an \boldalpha-abstraction describes the relation of a causal model to a coarser causal model. \boldalpha-abstractions enforce surjectivity of both the mapping of variables $\varphi$ and the mapping $\alpha_{\vars}$ of their ranges. The surjectivity requirement makes sure that all states of the abstracted model are represented in the base model. So a base model cannot represent only a sub-system of the coarse model. Dropping the requirement of surjectivity of $\varphi$ allows for defining refinements of sub-systems of a larger system. To give more intuition to this idea, consider the following motivating example: 

\begin{example}[Simplified Ecosystem Modeling]\label{ex:ecosystem}
    Imagine we model the causal dynamics of some ecosystem of deer and squirrels, and the effects of predators and human hunting on their population, as in the causal model shown in Fig.\ref{fig:coarse-model}:
    
    \begin{figure}
        \centering
        \input{Figures/coarse-model}
        \caption{A high-level causal model of a simplified ecosystem.}
        \label{fig:coarse-model}
    \end{figure}
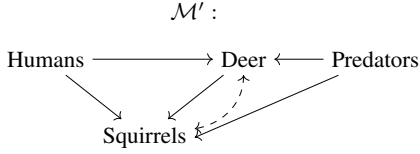

    However, for the area we are interested in, such causal models have not been defined. Instead other researchers have modeled two related systems as shown in Fig.\ref{fig:fig-detailed-models}: model $\model_1$, describing causal interactions between human hunting, squirrels, deer, and berry bush availability; and model $\model_2$, describing the causal interactions between wolves, eagles, red deer, fallow deer, and squirrels.

    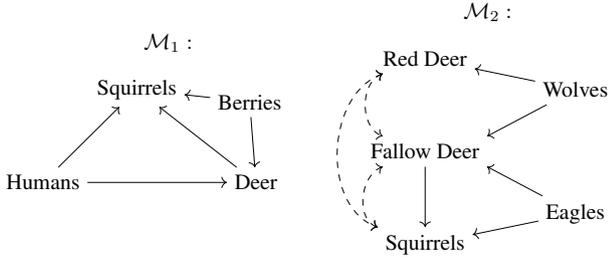
\begin{figure}
        \centering
        \input{Figures/detailed-models}
        \caption{Two low-level causal models, each modeling a sub-system of a simplified ecosystem.}
        \label{fig:fig-detailed-models}
    \end{figure}
    
    Notice that $\model_1$ and $\model_2$ together cover all variables we are interested in, albeit both $\model_1$ and $\model_2$ contain more detail. To construct a single model for the whole system, both models should be mapped to the same level of detail.
\end{example}

Following Ex.\ref{ex:ecosystem}, we formalize causal embeddings.

\paragraph{Formal Definition.}
We define an embedding using a generalization of the \alphabs{} framework by first expanding the \alphabs{} definition to allow for non-surjective maps $\varphi$ wrt $\vars_{\model'}$:

\begin{definition}[Non-surjective \alphabs{}]\label{def:Non-Sur-alpha-abs} 
    A non-surjective \alphabs{} is an \alphabs{} with:    
    \begin{enumerate}
        \item $\relevant \subseteq \vars_\model$ is a subset of relevant variables in $\model$.
        \item $\relevanttarget \subseteq \vars_{\model'}$ is a subset of relevant variables in $\model'$.
        \item $\varphi: \relevant \rightarrow \relevanttarget$ is a surjective map between relevant variables.
        \item $\alpha_{V'}: \range(\varphi^{-1}(V')) \rightarrow \range(V')$, for each $
        V' \in \relevanttarget$, is a surjective function from the range of the pre-image $\varphi^{-1}(V') \subseteq \relevant$ in $\model$ to the range of $V'$ in $\model'$.
    \end{enumerate}
\end{definition}

Similar to \alphabs{}, Def.\ref{def:Non-Sur-alpha-abs} does not enforce consistency or make any graphical guarantees. However, since embeddings are to encode a detailed description of a sub-system into a high-level model, it is important to ensure a compatible embedding. For this purpose we define embeddings as follows, by including graphical constraints: 

\begin{definition}[\boldalpha-embedding]\label{def:alpha-emb} 
    Given SCMs $\model$ and $\model'$ and a non-surjective \alphabs{} $\boldalpha$ with $\varphi:\relevant\rightarrow\relevanttarget$, \boldalpha{} is an \boldalpha-embedding iff the projection of the graph $G_{\model'}$ over $\relevanttarget$ is a CDAG of the projection of the graph $G_{\model}$ over $\relevant$.
\end{definition}

Let us illustrate \boldalpha-embeddings on our guiding example.

\begin{example}(Simplified Ecosystem Modeling)\label{ex:ecosystem2}
    Let us consider again the ecosystem models in Ex.\ref{ex:ecosystem} and suppose we define an \boldalpha-embedding $\alpha_1: \model_1\rightarrow\model'$ with:
    \begin{gather*}
    \varphi_{1}:\relevant_{\model_1} \rightarrow \relevanttarget_{\model'}:=
    \begin{cases}
        \text{Humans} &\mapsto \text{Humans}\\
        \text{Squirrels} &\mapsto \text{Squirrels}\\
        \text{Deer} &\mapsto \text{Deer}\\
    \end{cases}
    \end{gather*}

    $\alpha_1$ satisfies the \boldalpha-embedding definition as the projection of $G_{\model'}$ over $\relevanttarget_{\model'} = \{ Humans, Squirrels, Deer\}$ is a CDAG of the projection of $G_{\model_{1}}$ over $\relevant_{\model_1} = \{ Humans, Squirrels, Deer\}$.
    
\end{example}

An alternative definition makes a more explicit reference to the graphical constraints allowing for easier construction and verification of embeddings.
For this, we need the definition of mediated adjacencies and confounders \citep{massidda2024learning,schooltink2025aligning}.

\begin{definition}[Mediated adjacencies]
    Given a set of relevant variables $\relevant \subseteq \vars$ and $X,Y\in\relevant$, a mediated adjacency $X \rightsquigarrow Y$ wrt $\relevant$ is a directed path from $X$ to $Y$ such that all intermediate variables are not in $\relevant$.
\end{definition}

\begin{definition}[Mediated confounders]
    Given a set of relevant variables $\relevant \subseteq \vars$ and $X,Y\in\relevant$, a mediated confounder $X\medconfarrow Y$ wrt $\relevant$ is a hidden or observed confounder $Z \notin \relevant$, with $Z\rightsquigarrow X$ and $Z\rightsquigarrow Y$.
\end{definition}

We can now give a constructive definition of \boldalpha-embedding.

\begin{definition}[\boldalpha-embedding (Alternative)]\label{def:alpha-emb-alt}
    Given causal models $\model$ and $\model'$ and a non-surjective \alphabs{} with $\varphi:\relevant\rightarrow\relevanttarget$, \boldalpha{} is an \boldalpha-embedding iff the following two conditions hold:
    
    \begin{enumerate}
        \item a mediated adjacency $X'\rightsquigarrow Y'$ wrt $\relevanttarget$ is in $\model'$ iff there exists $X \in \varphi^{-1}(X'), Y \in \varphi^{-1}(Y')$ such that there is a mediated adjacency  $X\rightsquigarrow Y$ wrt $\relevant$.
        \item a mediated confounder $X'\medconfarrow Y'$ wrt $\relevanttarget$ is in $\model'$ iff there exists $X \in \varphi^{-1}(X'), Y \in \varphi^{-1}(Y')$ such that there is a mediated confounder $X\medconfarrow Y$ wrt $\relevant$.
    \end{enumerate}
\end{definition}

We can prove the equivalence of the two definitions (see proof in App.\ref{app:proof-eq-defs}):

\begin{lemma}[Equivalence Def.\ref{def:alpha-emb} and Def.\ref{def:alpha-emb-alt}]\label{lem:eq-defs}
    Def.\ref{def:alpha-emb} in terms of projections is equivalent to Def.\ref{def:alpha-emb-alt} in terms of explicit graphical constraints. 
\end{lemma}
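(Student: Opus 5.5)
The plan is to reduce both definitions to statements about a single canonical object, the \emph{latent projection} of a graph onto a set of relevant variables (App.\ref{app:latent}, following \citet{pearl1995projection}). The first step is a characterization lemma. For a DAG $G_{\model}$ and $\relevant\subseteq\vars_{\model}$, the projection of $G_{\model}$ over $\relevant$ in the sense of Def.\ref{def:SCM-projection} has exactly the following edges:
\begin{itemize}
\item a directed edge $X\rightarrow Y$, for $X,Y\in\relevant$, iff $G_{\model}$ contains a mediated adjacency $X\rightsquigarrow Y$ wrt $\relevant$;
\item a bidirected edge $X\confarrow Y$ iff $G_{\model}$ contains a mediated confounder $X\medconfarrow Y$ wrt $\relevant$.
\end{itemize}

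For the "if" directions of the lemma I will argue from graphical $\ltwo$-consistency with $\varphi$ the identity. A mediated adjacency $X\rightsquigarrow Y$ makes $Y$ causally depend on $X$ under the intervention $do(\relevant\setminus\{X,Y\})$. This dependence is a constraint in $\mathcal{G}^{\ltwo}(G_{\model})$, so the projected graph must contain $X\rightarrow Y$. Similarly, a mediated confounder produces a dependence between $X$ and $Y$ under $do(\relevant\setminus\{X,Y\})$ that is not explained by any directed path among relevant variables. This forces a bidirected edge.

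For the "only if" directions I will use the inclusion $\mathcal{G}^{\ltwo}(G_{\model'})\subseteq\mathcal{G}^{\ltwo}(G_{\model})$ in the other direction. If the projected graph had an extra edge not witnessed by a mediated path or confounder, it would fail to imply some (conditional) independence that $G_{\model}$ implies under the corresponding intervention. That independence would lie in $\mathcal{G}^{\ltwo}(G_{\model})$ but not in $\mathcal{G}^{\ltwo}(G_{\model'})$, and the minimal graph that is consistent in this way is exactly the latent projection.

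I expect this characterization to be the main obstacle. Def.\ref{def:graph-consistency} states set inclusion in one direction only, so I must carefully read $\mathcal{G}^{\ltwo}$ as containing both equalities and inequalities. Under that reading, inclusion forbids both missing and superfluous edges. The plan is to argue through do-calculus: the interventional (in)dependencies between pairs $X,Y$ under $do(\relevant\setminus\{X,Y\})$ determine the presence of directed and bidirected edges in a DAG with latents. With the characterization in hand, the projection is uniquely determined.

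Once the lemma is established, I apply it to both sides: $G_{\model}$ over $\relevant$ and $G_{\model'}$ over $\relevanttarget$. The projected graph of $\model'$ then has an edge $X'\rightarrow Y'$ iff $X'\rightsquigarrow Y'$ wrt $\relevanttarget$, and likewise for bidirected edges.

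The CDAG condition in Def.\ref{def:alpha-emb} is applied with the surjective $\varphi:\relevant\rightarrow\relevanttarget$. It says $X'\rightarrow Y'$ is in the projection of $G_{\model'}$ iff some $X\in\varphi^{-1}(X')$ and $Y\in\varphi^{-1}(Y')$ have $X\rightarrow Y$ in the projection of $G_{\model}$, and likewise for bidirected edges. Substituting the characterization on both sides of each biconditional gives exactly conditions 1 and 2 of Def.\ref{def:alpha-emb-alt}. Reading the same substitution backwards gives the converse, so the two definitions are equivalent.
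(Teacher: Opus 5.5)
Your plan is correct in substance, but it takes a different route from the paper's.

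\textbf{How the two routes differ.} The paper never determines the projection edge by edge. Instead it argues that both definitions reduce to graphical $\ltwo$-consistency of the underlying non-surjective $\boldalpha$-abstraction:
\begin{itemize}
\item Def.~\ref{def:alpha-emb-alt} gets there because a mediated adjacency (confounder) wrt the relevant set carries the same $\ltwo$ (in)equalities as a direct directed (bidirected) edge. This is combined with a result cited from \citet{schooltink2025aligning} that the ``edge iff mediated edge'' conditions enforce consistency.
\item Def.~\ref{def:alpha-emb} gets there because CDAGs are $\ltwo$-consistent \citep{anand2023causal} and projections preserve $\ltwo$ (in)equalities.
\end{itemize}
You instead isolate one lemma: any projection in the sense of Def.~\ref{def:SCM-projection} is exactly the latent projection. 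After that, the CDAG biconditionals become conditions 1 and 2 of Def.~\ref{def:alpha-emb-alt} by literal substitution.

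Your route gives a genuine biconditional. The paper's argument, as written, only shows that each definition implies graphical $\ltwo$-consistency; it does not give the converse implications needed to close the equivalence. Your route also gives uniqueness of the projection, which makes the definite article in Def.~\ref{def:alpha-emb} meaningful. The paper's route is shorter, delegates the graph-theoretic work to citations, and delivers Thm.~\ref{thm:emd-l2} at the same time.

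\textbf{Direction of the inclusion.} As phrased, your ``if'' and ``only if'' paragraphs both appeal to $\mathcal{G}^{\ltwo}(G_{\model})\subseteq\mathcal{G}^{\ltwo}(G_{\model'})$, which is the reverse of Def.~\ref{def:graph-consistency}. The arguments should run as follows.
\begin{itemize}
\item A missing edge $X\rightarrow Y$ is excluded because the projected graph would then imply that $P(Y\mid do(\relevant\setminus\{Y\}))$ is constant in $x$. $G_{\model}$ does not imply this equality, since intervening on $\relevant$ does not cut the mediated path.
\item A superfluous edge is excluded because the projected graph would imply the corresponding inequality. That inequality is not in $\mathcal{G}^{\ltwo}(G_{\model})$, because $G_{\model}$ implies the equality.
\end{itemize}
Only the second case actually needs the inequality reading of $\mathcal{G}^{\ltwo}$.

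\textbf{Choice of witnesses.} Use $do(\relevant\setminus\{Y\})$, not $do(\relevant\setminus\{X,Y\})$, as the witness for directed edges. For a bidirected edge that coexists with a directed one, plain dependence under $do(\relevant\setminus\{X,Y\})$ does not separate the two cases. Use a rule-2 type constraint instead, such as $P(Y\mid x, do(\relevant\setminus\{X,Y\}))\neq P(Y\mid do(x,\relevant\setminus\{X,Y\}))$.

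\textbf{Existence.} State explicitly that all of the latent projection's constraints are implied by $G_{\model}$, as in \citet{pearl1995projection}. Without this, Def.~\ref{def:alpha-emb} could fail vacuously while Def.~\ref{def:alpha-emb-alt} holds.
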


Note that the definition of \boldalpha-embedding only restricts the edges that are part of mediated adjacencies and confounders, any edge in the graph of $\model'$ that is not part of either of these may exist (see following example). 

\paragraph{Example.}(Example of an Embedding)\label{ex:emb}
\textit{In this example we illustrate an example of a permissible embedding, and highlight some graphical properties. The following diagram illustrates an $\boldalpha$-embedding, with non-relevant variables denoted as {\color{gray}$\bullet$}:}
    \begin{center}
        \input{Figures/embedding-example}
    \end{center}
    \textit{The diagram shows:
    \begin{enumerate}
        \item[(i)] How paths in the low-level model (left) can be mapped to one or multiple paths in the high-level model. The path from the cluster $\{X_1,X_2\}$ to $\{Y\}$ is represented in the high-level model with two paths from $X'$ to $Y'$, one containing two mediating variables. Embeddings preserve causal effects, but do not enforce causal pathways beyond consistency.
        \item[(ii)] How confounders are mapped to confounders. The unobservable confounder in the low-level model between $\{Y\}$ and $\{W\}$ is represented in the high-level model with an observable confounder between $Y'$ and $W'$. Embeddings require confounding effects to be consistent, but unobservable confounders may become observable or vice-versa.
        \item[(iii)] How additional structures may exist in the high-level model. The high-level graph has an additional parent for $X'$, child of $Y'$ and collider structure with parents $X', W', {\color{gray}\bullet}$. Any number of variables can exist that do not create new causal pathways between the relevant variables.
        \item [(iv)] How some embedding might be unintuitive. The high-level model may introduce an observable separation $X' \rightarrow {\color{gray}\bullet} \rightarrow W'$ which is not represented in the low-level model $X_2 \rightarrow W$. 
    \end{enumerate}}

As illustrated, \boldalpha-embedding are versatile. If patterns as (iv) in the example are deemed undesirable, one can adopt additional restrictions; for example, requiring a graph homomorphism from the graph of $\model$ to a subgraph of $\model'$ respecting the variable mapping, as is in $\phi$-abstractions \citep{otsuka2022equivalence}. For the sake of generality, we do \emph{not} impose such a restriction.

Reconsidering the motivating example, \boldalpha-embeddings provide a formal tool to specify all the needed embeddings: 

\begin{example}(Simplified Ecosystem Modeling)\label{ex:finished-maps}
    Let us consider again Ex.\ref{ex:ecosystem2} and let us define also the last \boldalpha-embedding for model $\model_2$ onto $\model'$.
\begin{gather*}
\varphi_2:\vars_{\model_2} \rightarrow \vars_{\model'}:=
\begin{cases}
    \text{Eagles} &\mapsto \text{Predators}\\
    \text{Wolves} &\mapsto \text{Predators}\\
    \text{Red Deer} &\mapsto \text{Deer}\\
    \text{Fallow Deer} &\mapsto \text{Deer}\\
    \text{Squirrels} &\mapsto \text{Squirrels}\\
\end{cases}
\end{gather*}
This completes the example with a graphical specification of the \boldalpha-embeddings necessary to map $\model_1, \model_2$ onto $\model'$.
\end{example}

\subsection{Causal Consistency of Embeddings} 
Similar to abstractions, it is important to assess the consistency of causal embeddings. Therefore, we will define a notion of consistency for embeddings, both in terms of a functional consistency, wrt the functional descriptions of SCMs, and graphical consistency, wrt to the causal graphs.

\paragraph{Functional Consistency.} First let us consider the functional side of the \boldalpha-embedding: the non-surjective \alphabs{}. We adapt the $\li$-abstraction error from Def.\ref{def:func-consistency}:

    \begin{definition}[$\li$-Embedding error]\label{def:emb-consistency}
    Given an \boldalpha-embedding $\boldalpha:\model\rightarrow\model'$ and $\mathbf{X}',\mathbf{Y}' \subseteq \relevanttarget$, the $\li$-error is given by the maximum distance or divergence between the distribution obtained by first embedding and then evaluating and the distribution obtained by first evaluating and then embedding, as given by the formula:
    \begin{multline}
            \hfill e_{\li}(\boldsymbol{\alpha})=\hfill
            \max_{\mathbf X', \mathbf Y'\subseteq \relevanttarget} D
            \left(P_{\model'}(\mathbf Y'\:|\:\alpha_{\mathbf{X}'}[\li(\varphi^{-1}(\mathbf{X}'))]),\right. \\ 
            \left.\alpha_{\mathbf{Y}'}\left[P_{\model}(\varphi^{-1}(\mathbf Y')\:|\: \li(\varphi^{-1}(\mathbf X')))\right]\right)
    \end{multline}
\end{definition}

An embedding with $\li$-error zero is $\li$-consistent. Functional consistency does not follow by construction, as this may be too limiting and some error may be tolerable, similar to abstractions. Notice that the embedding error subsumes the abstraction error and can be considered a generalization, mirroring the definition of the non-surjective \alphabs.

\begin{remark}\label{rem:1}
    Let $\mathbf{M}:=\{\model_1,\dots,\model_n\}$ be a set of $n$ causal models and $\mathbf{A}:=\{\boldalpha_1,\dots,\boldalpha_n\}$ be a set of $n$ \boldalpha-embeddings such that $\boldalpha_i:\model_i\rightarrow\model'$ embeds into the same $\model'$ for all $i$. \li-consistency of all $\boldalpha\in\mathbf{A}$ solely does \emph{not} imply uniqueness of the \li-distributions over the set of relevant variables $\relevanttarget$ for any permissible $\model'$, see App.\ref{app:rem-1} for a concrete example.
    Instead, to guarantee uniqueness of all distributions stronger assumptions on the set of embeddings $\mathbf{A}$ are needed. We conjecture a sufficient condition would be that all causal and confounding edges are constrained by at least one of the embeddings $\boldalpha\in\mathbf A$.
\end{remark}

\paragraph{Graphical Consistency.}
Additional to the functional side, embeddings have a graphical restriction, so we will also consider graphical consistency by adjusting for unmapped variables in the high-level model. For this we introduce the following notation:

\begin{definition}[Graphical $\li$-Embedding Consistency] \label{def:graph-emb-consistency}
Given a variable map $\varphi:\relevant\rightarrow\relevanttarget$, let $G_{[\relevant]}$ be a projection of the DAG of $\model$ over $\relevant$ and $G_{[\relevanttarget]}$ a projection of the DAG of $\model'$ over $\relevanttarget$.
The two models $\model$ and $\model'$ are graphically consistent iff 
\begin{equation}
    \mathcal{G}^{\li}\left(G_{\left[\relevanttarget\right]}^{-1}\right) \subseteq \mathcal{G}^{\li}\left(G^{ }_{\left[\relevant\right]}\right). 
\end{equation}
\end{definition}

As the \boldalpha-embedding is defined in terms of projections and CDAGs it inherits graphical $\ltwo$-consistency (see proof from Lem.\ref{lem:eq-defs}: App.\ref{app:proof-eq-defs}):

\begin{theorem}[\boldalpha-embedding is graphically \ltwo-consistent]\label{thm:emd-l2}
    By definition an \boldalpha-embedding is graphically \ltwo-consistent.
\end{theorem}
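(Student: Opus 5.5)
The plan is to unfold Def.\ref{def:alpha-emb} and reduce the claim to the known alignment between CDAGs and graphical $\ltwo$-consistency for ordinary abstractions. By Def.\ref{def:alpha-emb}, an \boldalpha-embedding with $\varphi:\relevant\rightarrow\relevanttarget$ comes with two graphs: a projection $G_{[\relevant]}$ of $G_{\model}$ over $\relevant$, and a projection $G_{[\relevanttarget]}$ of $G_{\model'}$ over $\relevanttarget$. Moreover, $G_{[\relevanttarget]}$ is a CDAG of $G_{[\relevant]}$ with respect to $\varphi$. Here $\varphi$ is surjective onto $\relevanttarget$, which is exactly what the CDAG definition needs. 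Def.\ref{def:graph-emb-consistency} asks for $\mathcal{G}^{\ltwo}(G_{[\relevanttarget]}^{-1}) \subseteq \mathcal{G}^{\ltwo}(G_{[\relevant]})$. This is precisely graphical $\ltwo$-consistency (Def.\ref{def:graph-consistency}) of the \emph{abstraction} $G_{[\relevant]}\to G_{[\relevanttarget]}$ along $\varphi$. The theorem therefore reduces to one claim: a CDAG is graphically $\ltwo$-consistent with its base graph.

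For that claim I would use the CDAG results of \citet{anand2023causal}. They show that do-calculus, and more generally the $\ltwo$ constraints encoded by $d$-separation in mutilated graphs, remains sound on a CDAG when each cluster is read as its pre-image. Concretely, I take an arbitrary constraint in $\mathcal{G}^{\ltwo}(G_{[\relevanttarget]})$: a conditional independence $\mathbf{Y}' \perp \mathbf{Z}' \mid \mathbf{W}', do(\mathbf{X}')$ read off by $d$-separation in the mutilated CDAG, or an (in)equality of the form $P(\mathbf{Y}'\mid do(\mathbf{X}'))=P(\mathbf{Y}')$. I then argue by contraposition on paths. Suppose the substituted constraint failed in $G_{[\relevant]}$. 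Then there is a $d$-connecting path in the mutilated low-level graph between $\varphi^{-1}(\mathbf{Y}')$ and $\varphi^{-1}(\mathbf{Z}')$ given $\varphi^{-1}(\mathbf{W}')$ under $do(\varphi^{-1}(\mathbf{X}'))$. The CDAG edge conditions (directed and bidirected edges exist iff some inter-cluster low-level edge exists) map this path edge by edge to a walk in the high-level graph. Colliders and non-colliders are preserved at cluster boundaries, and mutilation commutes with $\varphi$ because intervening on a cluster removes exactly the images of the removed edges. So the walk $d$-connects in the mutilated CDAG, a contradiction. Dependence constraints such as $X'\rightarrow Y'$ implying $P(Y'\mid do(X'))\neq P(Y')$ transfer in the other direction, via the ``only if'' part of the CDAG edge condition.

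Finally, I would note that the projections themselves introduce no issue. By Def.\ref{def:SCM-projection} each projection is already graphically $\ltwo$-consistent with its full graph, so the embedding's constraints are consistent both with $G_{[\relevant]}$ and with $\ldots$ the original $G_{\model}$ restricted to $\relevant$. This is also where Lemma \ref{lem:eq-defs} can be used: the mediated adjacency and confounder conditions of Def.\ref{def:alpha-emb-alt} give an explicit description of the projected edges, which makes the edge-by-edge path mapping concrete.

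The main obstacle is the path-lifting step. A $d$-connecting path in the low-level graph may pass through several variables of the same cluster. Its image is then a walk, possibly with repeated vertices or edges internal to a cluster that disappear under $\varphi$, and it must still be shown to contain a genuinely $d$-connecting high-level path. My first attempt is to contract each maximal intra-cluster segment to its cluster node. I would then check collider status: at a contracted node, descendants of a collider inside the cluster are descendants of the cluster node, so conditioning is preserved. Where this becomes delicate, I would fall back on citing the soundness theorem for CDAGs in \citet{anand2023causal} rather than reproving it.
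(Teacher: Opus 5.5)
Your proposal is correct and follows essentially the paper's route. The paper likewise unfolds Def.~\ref{def:alpha-emb}, reads Def.~\ref{def:graph-emb-consistency} as graphical $\ltwo$-consistency of the CDAG $G_{[\relevanttarget]}$ over $G_{[\relevant]}$ along $\varphi$, and concludes by citing the $\ltwo$-soundness of CDAGs from \citet{anand2023causal}, together with the fact that projections preserve $\ltwo$ (in)equalities by definition. Your path-lifting sketch only makes that cited step explicit, and the contraction does work for d-separation statements because conditioning and intervention sets are unions of clusters. Note, though, that as in the paper the dependence side is justified only for edge-level inequalities, not for path-level dependencies of the CDAG, which need not hold in $G_{[\relevant]}$; the statement therefore implicitly relies on that restricted reading of $\mathcal{G}^{\ltwo}$.
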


A relation between functional and graphical consistency for embeddings can be proved in analogy to the same relation proved for abstractions (proofs in App.\ref{app:graph-implies-func} and App.\ref{app:func-not-implies-graph-proof}):

\begin{theorem}[Graphical $\ltwo$-consistent map $\Rightarrow$ Functional $\ltwo$-consistent embedding]\label{thm:graph-implies-func}
    Given a graphically \ltwo-consistent $\varphi:\relevant\rightarrow\relevanttarget$ with $\relevant\subseteq\vars_{\model}$ and $\relevanttarget\subseteq\vars_{\model'}$ there exists a specification of $\model'$ and an \boldalpha-embedding $\boldalpha:\model\rightarrow\model'$ with variable map $\varphi$ s.t. $\boldalpha$ is functionally \ltwo-consistent.
\end{theorem}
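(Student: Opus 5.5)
The plan is constructive: we build $\model'$ explicitly so that the high-level model, restricted to $\relevanttarget$, simply reproduces the pushed-forward low-level mechanisms. We also reuse the analogous abstraction result recalled after Def.\ref{def:graph-consistency}, which says that graphical $\li$-consistency implies the existence of a functionally consistent $\model'$.

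\emph{Step 1: reduce to the relevant variables.} Take a projection $\model_{[\relevant]}$ of $\model$ over $\relevant$, chosen by latent projection (App.\ref{app:latent}). Concretely, every variable of $\vars_\model\setminus\relevant$ is absorbed into the exogenous noise. Then every $P_{\model}(\mathbf{Y}\mid \ltwo(\mathbf{X}))$ with $\mathbf{X},\mathbf{Y}\subseteq\relevant$ is preserved exactly. Mediated adjacencies become edges and mediated confounders become bidirected edges, so $G_{[\relevant]}$ is the projected graph.

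\emph{Step 2: build $\model'$ on $\relevanttarget$.} Set the range of each $V'\in\relevanttarget$ to be $\alpha_{V'}(\range(\varphi^{-1}(V')))$, so that $\alpha_{V'}$ is surjective by construction. Take $G_{[\relevanttarget]}$ to be the CDAG of $G_{[\relevant]}$ under $\varphi$. Graphical $\ltwo$-consistency guarantees that this clustered graph is acyclic and that its $\ltwo$ constraints are implied by the low-level graph. The exogenous variables of $\model'$ are the exogenous variables of $\model_{[\relevant]}$, grouped by cluster. The mechanisms follow the standard CDAG construction. Let $\hat{\mathbf{V}}$ be the low-level configuration of the cluster $\varphi^{-1}(V')$. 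Define $f_{V'}$ by composing the low-level mechanisms inside the cluster with $\alpha_{V'}$, using as inputs the parent clusters and that cluster's noise.

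This is well defined only if $\alpha_{V'}$ of the cluster depends on the parent clusters solely through their $\alpha$-images. I expect this to be the main obstacle. My first attempt would avoid the issue by letting the exogenous variable of $V'$ carry the full low-level state of its parent clusters, correlated appropriately through shared exogenous variables. That simulation, however, adds bidirected edges, and these must be permitted by graphical consistency. Failing that, I would assume that $\alpha$ admits such a factorization, or choose a specification in which $\alpha$ is injective on the relevant ranges. Since the theorem only asserts existence of \emph{some} $\model'$ and $\boldalpha$, we are free to pick $\alpha_{V'}$, and choosing it injective (a relabeling) removes the obstacle entirely.

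\emph{Step 3: extend to all of $\model'$.} Any variables in $\vars_{\model'}\setminus\relevanttarget$ can be added as isolated or sink variables. This creates no new mediated adjacencies or confounders among $\relevanttarget$, so Def.\ref{def:alpha-emb-alt} holds and $\boldalpha$ is an $\boldalpha$-embedding by Lemma \ref{lem:eq-defs}.

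\emph{Step 4: verify zero error.} Take $\mathbf{X}',\mathbf{Y}'\subseteq\relevanttarget$ and apply $do(\alpha_{\mathbf{X}'}(\mathbf{x}))$ in $\model'$. By construction, the intervened $\model'$ is the pushforward under $\alpha$ of $\model_{[\relevant]}$ under $do(\varphi^{-1}(\mathbf{X}')=\mathbf{x})$. With injective $\alpha$, the low-level $\mathbf{x}$ is recovered uniquely from its image. Therefore $P_{\model'}(\mathbf{Y}'\mid\alpha_{\mathbf{X}'}[\ltwo(\cdot)])=\alpha_{\mathbf{Y}'}[P_\model(\varphi^{-1}(\mathbf{Y}')\mid\ltwo(\varphi^{-1}(\mathbf{X}')))]$, and taking the maximum over all $\mathbf{X}',\mathbf{Y}'$ in Def.\ref{def:emb-consistency} gives $e_{\ltwo}=0$.
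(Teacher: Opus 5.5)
\textbf{Gap in Step 3: the graph of $\model'$ is not yours to choose.}

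Your Steps 1, 2 and 4 are sound. Step 1 matches the paper's first step: project $\model$ over $\relevant$ while preserving its $\ltwo$ distributions. Step 2 is more explicit than the paper, which only cites the CDAG literature for the existence of an $\ltwo$-consistent model over $\relevanttarget$. You correctly see that choosing each $\alpha_{V'}$ injective (a relabeling of the cluster's joint states) removes the factorization obstacle. That choice is legitimate because $\boldalpha$ is existentially quantified.

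The hypothesis that $\varphi$ is graphically $\ltwo$-consistent is a statement about $G_{[\relevanttarget]}$, the projection of the DAG of the given $\model'$. So the graph of $\model'$ is fixed, including where its non-relevant variables sit relative to $\relevanttarget$. ``A specification of $\model'$'' therefore means mechanisms and noise respecting that graph; the paper's proof says explicitly that it specifies functions ``respecting the causal graph''. The same issue affects Step 2: you should argue that the \emph{given} $G_{[\relevanttarget]}$ coincides with the CDAG, rather than take it to be the CDAG.

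Step 3 instead redesigns $\model'$ outside $\relevanttarget$. It gives each $V'\in\relevanttarget$ its projected parents as direct arguments and demotes every other variable to an isolated or sink node. This fails whenever $G_{\model'}$ realizes a projected edge only through a mediator. For example, take $X'\rightarrow W\rightarrow Y'$ with $W\notin\relevanttarget$ and no edge $X'\rightarrow Y'$. Then $X'$ may not appear in $f_{Y'}$, and making $W$ isolated deletes edges of the given graph. The same problem arises when a bidirected edge of $G_{[\relevanttarget]}$ comes from an observed non-relevant common cause $Z$ with $Z\rightsquigarrow X'$ and $Z\rightsquigarrow Y'$, rather than from shared exogenous noise.

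\textbf{The missing idea is the paper's lifting step.}
\begin{enumerate}
\item Give every non-relevant ancestor of $\relevanttarget$ a lossless pass-through mechanism that outputs the tuple of all its inputs, both endogenous parents and exogenous variables, with its range chosen as the corresponding product.
\item Attach the private noise of each $V'\in\relevanttarget$ directly to $V'$.
\item Put the shared noise of each projected confounder either on a bidirected edge of $G_{\model'}$ or on the exogenous parents of a corresponding mediated confounder, from where it is passed along.
\item Let non-ancestors of $\relevanttarget$ have arbitrary mechanisms.
\item Define $f_{V'}$ for $V'\in\relevanttarget$ by reading off, from its actual parents, exactly the arguments your Step~2 mechanism needs.
\end{enumerate}
Interventions in Def.~\ref{def:emb-consistency} only target subsets of $\relevanttarget$, so the pass-through variables are never intervened on. The lifted $\model'$ therefore induces the same $\ltwo$ distributions over $\relevanttarget$ as your cluster-level model, and your Step~4 goes through unchanged.
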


\begin{theorem}[Functional $\ltwo$-consistent non-surjective abstraction $\not\Rightarrow$ embedding]\label{thm:func-not-implies-graph}
    A non-surjective \ltwo-consistent \boldalpha-abstraction does not necessarily imply an \boldalpha-embedding.
\end{theorem}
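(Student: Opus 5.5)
Since the statement is a non-implication, a single counterexample suffices. The plan is to exhibit SCMs $\model$ and $\model'$ together with a non-surjective \alphabs{} $\boldalpha$ (Def.\ref{def:Non-Sur-alpha-abs}) that has $\ltwo$-error zero in the sense of Def.\ref{def:emb-consistency}, yet violates the graphical requirement of Def.\ref{def:alpha-emb}. By Lem.\ref{lem:eq-defs}, this last part amounts to violating Def.\ref{def:alpha-emb-alt}. The natural mechanism is an edge whose functional effect is invisible at the level of interventional distributions, so that functional consistency cannot detect it while the graph-level condition can.

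\textbf{Construction.} Let $\model$ have $\vars_\model=\{X,Y\}$, binary, with $f_X=U_X$, $U_X\sim\mathrm{Bern}(1/2)$, and $f_Y=U_Y$ independent of $X$, so $G_\model$ has no edge. Let $\model'$ have $\vars_{\model'}=\{X',Y',W'\}$. Set $X'=U_{X'}$ and $Y'=X'\oplus U_{Y'}$, with $U_{Y'}\sim\mathrm{Bern}(1/2)$ and $P(U_Y)$ matched so that $Y'$ is marginally $\mathrm{Bern}(1/2)$; take $W'$ as an irrelevant extra variable to make the map non-surjective. The structural function $f_{Y'}$ genuinely depends on $X'$, so $X'\rightarrow Y'\in G_{\model'}$. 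However, $P(Y'\mid do(X'=x'))=\mathrm{Bern}(1/2)$ for every $x'$, matching $P_\model(Y\mid do(X=x))$ exactly. Take $\relevant=\{X,Y\}$, $\relevanttarget=\{X',Y'\}$, with $\varphi$ and $\alpha$ identities on ranges.

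\textbf{Verification.} I will check that $\ltwo$-error is zero by running through all pairs $\mathbf X',\mathbf Y'\subseteq\relevanttarget$: the empty set, singletons, and the pair. Each computation is routine. Marginals agree, interventions on $X'$ leave $Y'$ uniform, interventions on $Y'$ leave $X'$ uniform, and the joint under no intervention is uniform on $\{0,1\}^2$ in both models. Next I will note that $X'\rightsquigarrow Y'$ is a mediated adjacency with respect to $\relevanttarget$ in $\model'$, while $\model$ has no $X\rightsquigarrow Y$. Condition 1 of Def.\ref{def:alpha-emb-alt} therefore fails, and $\boldalpha$ is not an \boldalpha-embedding.

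\textbf{Main obstacle.} The delicate point is definitional. The paper defines graph edges via $Pa_V$ membership, i.e.\ syntactic dependence of $f_V$, rather than via distributional dependence. The counterexample hinges on $f_{Y'}$ depending on $X'$ in a way that averages out. I must state explicitly that $X'\in Pa_{Y'}$ because $f_{Y'}(0,u)\neq f_{Y'}(1,u)$ for fixed $u$; this holds for XOR. I also need to confirm that the pushforward and intervention conventions of Def.\ref{def:emb-consistency} produce equal distributions for every pair $\mathbf X',\mathbf Y'$. If a stricter reading of edges via faithfulness were intended, I would fall back to a confounding version instead: add a latent $U$ common to $X'$ and $Y'$ whose effect is again cancelled by XOR noise, thereby violating condition 2 of Def.\ref{def:alpha-emb-alt}.
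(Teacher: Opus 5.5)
Your counterexample is correct under the paper's conventions. Edges are defined by membership in $Pa_V$, and $f_{Y'}=X'\oplus U_{Y'}$ genuinely depends on $X'$. However, it rests on a different mechanism from the paper's.

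The paper takes $Z=X+Y$ with $X$ uniform on $\{0,2,4\}$ and $Y\sim B(0.5)$, and coarsens via $\alpha_{Z'}(z)=z \bmod 2$, which erases the effect of $X$. The high-level model $Z'=Y'$ is then functionally $\ltwo$-consistent while lacking $X'\to Z'$. So there the mismatch is a low-level edge with no high-level counterpart. Non-embedding is certified by noting that $G_{\model}$ does not entail the pulled-back equality $P(Z\mid do(X))=P(Z)$, so graphical $\ltwo$-consistency fails and Thm.~\ref{thm:emd-l2} applies.

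You instead keep $\varphi$ and $\alpha$ as identities and hide an edge inside a cancelling high-level mechanism. Your mismatch therefore runs the other way and violates the ``only if'' half of Def.~\ref{def:alpha-emb-alt}. You could also check the CDAG condition of Def.~\ref{def:alpha-emb} directly, without leaning on Lem.~\ref{lem:eq-defs}, since with $W'$ isolated both projections are just induced subgraphs.

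Each construction has its advantages. Yours is more elementary and, thanks to $W'$, genuinely non-surjective onto $\vars_{\model'}$. The paper's $\varphi$ is a bijection onto $\vars_{\model'}$, so its example is in fact an ordinary $\boldalpha$-abstraction. The paper's construction is more robust and more informative. Every edge in its models carries a real interventional effect and the violated constraint is an equality, so it survives a dependence-based reading of edges. It also shows that the coarsening performed by $\alpha$ itself is what decouples functional from graphical consistency.

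Yours hinges on the syntactic reading of edges. If argued via graphical consistency, it also needs $\mathcal{G}^{\ltwo}$ to contain inequalities, since the pulled-back graph $X\to Y$ entails no equality that the empty graph lacks. Your confounding fallback would not help under a dependence-based reading either: a latent confounder cancelled by XOR noise is exactly as invisible as the cancelled directed edge.

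Finally, state explicitly that $U_{X'},U_Y\sim\mathrm{Bern}(1/2)$ and that all exogenous variables are mutually independent. Also state that $W'$ is isolated (e.g.\ $W'=U_{W'}$), so that it opens no mediated path between $X'$ and $Y'$.
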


Notice that the asymmetry between Thm.\ref{thm:graph-implies-func} and Thm.\ref{thm:func-not-implies-graph} follows from Thm.\ref{thm:emd-l2}, as Def.\ref{def:alpha-emb} requires graphical \ltwo-consistency but not functional \ltwo-consistency.

%% file: Figures/coarse-model.tex
\begin{center}
    \normalfont
    \resizebox{.7\linewidth}{!}{
    \begin{tikzpicture}[x=-1cm]
        \node (model) at (.75,.75) {$\model':$};
        \node (Squirrels) at (1.5,-1.2) {Squirrels};
        \node (Deer) at (0,0) {Deer};
        \node (Humans) at (3,0) {Humans};
        \node (Predators) at (-2,-0) {Predators};

        \draw[->] (Humans) -> (Deer);
        \draw[->] (Humans) -> (Squirrels);
        \draw[->] (Predators) -> (Deer);
        \draw[->] (Predators) -> (Squirrels.east);
        \draw[->] (Deer) -> (Squirrels);
        \draw[<->, dashed] (Deer) to[out=270, in=10] (Squirrels);
    \end{tikzpicture}
    }
\end{center}

%% file: Figures/detailed-models.tex
\begin{center}
        \normalfont
        \resizebox{\linewidth}{!}{
        \begin{tikzpicture}[x=-1cm]
            \node (model2) at (-2.1,-1.75) {$\model_2:$};
            \node (Red Deer) at (-1.1,-2.5) {Red Deer};
            \node (Fallow Deer) at (-1.1,-4) {Fallow Deer};
            \node (Rabbits3) at (-1.1,-5.5) {Squirrels};
            \node (Wolves) at (-3.5,-3) {Wolves};
            \node (Eagles) at (-3.5,-5) {Eagles};
    
            \draw[->] (Wolves) -> (Red Deer);
            \draw[->] (Wolves) -> (Fallow Deer.north east);
            \draw[->] (Eagles) -> (Fallow Deer.south east);
            \draw[->] (Eagles) -> (Rabbits3);
            \draw[->] (Fallow Deer) -> (Rabbits3);
            \draw[<->, dashed] (Red Deer) to[in=160, out=-160] (Fallow Deer);
            \draw[<->, dashed] (Fallow Deer) to[in=160, out=-160] (Rabbits3);
            \draw[<->, dashed] (Red Deer) to[in=160, out=-160] (Rabbits3);
    
            \node (model1) at (3,-2.25) {$\model_1:$};
            \node (Rabbits2) at (3.5,-3) {Squirrels};
            \node (Deer2) at (1.6,-4.5) {Deer};
            \node (Humans2) at (5,-4.5) {Humans};
            \node (Berries) at (1.7,-3.2) {Berries};
    
            \draw[->] (Humans2) -> (Deer2);
            \draw[->] (Deer2) -> (Rabbits2);
            \draw[->] (Berries) -> (Deer2);
            \draw[->] (Berries) -> (Rabbits2);
            \draw[->] (Humans2) -> (Rabbits2);
        \end{tikzpicture}
        }
    \end{center}

%% file: Figures/embedding-example.tex
\begin{tikzpicture}
    \node (X1) at (-.8,-.1) {$X_1$};
    \node (X2) at (.8,.1) {$X_2$};
    \node (Y) at (-.3,-2) {$Y$};
    \node (W) at (.8,-1.3) {$W$};
    \node (Z) at (2,-1.7) {\color{gray}$\bullet$};

    \draw[->] (X1) -- (X2);
    \draw[->] (X2) -- (W);
    \draw[->] (W) -- (Z);
    \draw[->] (X1) -- (Y);
    \draw[<->, dashed] (Y)  to[out=0, in=250] (W);

    \draw[dotted, rotate=7] (0, 0) ellipse (1.2 and .4);
    \draw[dotted] (-.3,-2) ellipse (.3 and .3); 
    \draw[dotted] (.8,-1.3) ellipse (.3 and .3);

    
    \node (Xp) at (5.2,0) {$X'$};
    \node (Yp) at (5.7,-2) {$Y'$};
    \node (Wp) at (6.8,-1.3) {$W'$};
    \node[minimum size=0pt,inner sep=1pt, outer sep=0pt] (Zp1) at (5,1) {\color{gray}$\bullet$};
    \node[minimum size=0pt,inner sep=1pt, outer sep=0pt] (Zp2) at (6,-.65) {\color{gray}$\bullet$};
    \node[minimum size=0pt,inner sep=1pt, outer sep=0pt] (Zp3) at (4.6,-.65) {\color{gray}$\bullet$};
    \node[minimum size=0pt,inner sep=1pt, outer sep=0pt] (Zp4) at (4.8,-1.4) {\color{gray}$\bullet$};
    \node[minimum size=0pt,inner sep=1pt, outer sep=0pt] (Zp5) at (6.6,-2.1) {\color{gray}$\bullet$};
    \node[minimum size=0pt,inner sep=1pt, outer sep=0pt] (Zp6) at (5.5,-3) {\color{gray}$\bullet$};
    \node[minimum size=0pt,inner sep=2pt, outer sep=0pt] (Zp7) at (6.6,0.1) {\color{gray}$\bullet$};

    \draw[->] (Zp1) -- (Xp);
    \draw[->] (Xp) -- (Zp2);
    \draw[->] (Zp2) -- (Wp);
    \draw[->] (Xp) -- (Yp);
    \draw[->] (Xp) -- (Zp3);
    \draw[->] (Zp3) -- (Zp4);
    \draw[->] (Zp4) -- (Yp);
    \draw[->] (Zp5) -- (Yp);
    \draw[->] (Zp5) -- (Wp);
    \draw[->] (Yp) -- (Zp6);
    \draw[->] (Xp) -- (Zp7);
    \draw[->] (Wp) -- (Zp7);
    \draw[->] (Zp2) -- (Zp7);

    \node (emb1) at (3,0) {\scriptsize $\{X_1,X_2\}\mapsto X'$};
    \node (emb2) at (3,-.4) {\scriptsize $\{Y\}\mapsto Y'$};
    \node (emb3) at (3,-.8) {\scriptsize $\{W\}\mapsto W'$};
    \draw[->, double] (2,-1) -- (4,-1);
    
\end{tikzpicture}

%% file: Sections/4-Marginal-Problem.tex
\section{Multi-Resolution Marginal Problem}
We now show how $\boldalpha$-embeddings can be used to expand upon the causal marginal problem. 
Recall the causal marginal problem (Def.\ref{def:causal-marginal-problem}) where marginal models have sets of overlapping variables. We consider the case when this assumption does not hold: that is, marginal models have different representations of the overlapping variables, either by (i) having differing resolutions, or (ii) having the overlapping variables represented using multiple variables. We refer the reader again to Ex.\ref{ex:ecosystem}-\ref{ex:finished-maps}, where the detailed models have overlapping variables through the embedding.

We define the Multi-Resolution Causal Marginal Problem as an extension of the causal marginal problem where the resolution of the marginal models can differ and a common resolution must be found to represent all marginal models:

\begin{definition}[Multi-Resolution Causal Marginal Problem]\label{def:mrmp}
    Let $\mathbf{V}^*$ be the set of high-level variables, and let $\model_1, \dots, \model_n$ be SCMs together with associated mappings $\varphi_i:\vars_{\model_i} \rightarrow \vars^*$, find the space of joint causal models $\model^*$ over the variables $\vars^*$ consistent with 
    $\model_1, \dots, \model_n$.
\end{definition}

The variable sets of the marginal models no longer need to overlap, but there must exist mappings from each marginal model to a collective set of high-level variables over which an SCM can be specified. This allows for different representations of the previously overlapping variables in each model; for example, a variable $V\in \vars^*$ may be represented by multiple variables in $\model_1$ and by only one variable in $\model_2$. Additionally, this framework allows for models to have different levels of resolution for each variable; for example, a variable $V\in \vars^*$ may be define on a discretized domain in $\model_1$ and on a continuous domain in $\model_2$.

Interestingly, there is a close connection between consistency in embeddings and the marginal problem in that a set of consistent embeddings define a solution to the multi-resolution marginal problem (see proof in App.\ref{app:emb-causal-sol-proof}):

\begin{theorem}[Consistent Embeddings as Solution to the Multi-Resolution Marginal Problem]\label{thm:emb-causal-sol}
    Let $\mathbf{M}:=\{\model_1,\dots,\model_n\}$ be a set of $n$ SCMs and $\mathbf{A}:=\{\boldalpha_1,\dots,\boldalpha_n\}$ be a set of $n$ \boldalpha-embeddings such that $\boldalpha_j:\model_j\rightarrow\model'$ embeds into the same $\model'$ for all $j$. $\model'$ is a solution $\model^*$ of the multi-resolution marginal problem if for all $\boldalpha_j\in\mathbf{A}$: (i) $\boldalpha_j$ is $\mathcal{L}_i$-consistent and (ii) the set of relevant variables contains all variables $\relevant = \vars_{\model_j}$.
\end{theorem}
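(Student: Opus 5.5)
Since Def.\ref{def:mrmp} leaves "consistent with $\model_1,\dots,\model_n$" informal, I will first fix its meaning to match the embedding framework. I take $\model^*$ over $\vars^*$ to be consistent with the marginal models if, for each $j$, there are maps $\alpha_{j,V'}$ on the fibres of $\varphi_j:\vars_{\model_j}\rightarrow\vars^*$ such that every $\li$-query of $\model^*$ over $\varphi_j(\vars_{\model_j})$ equals the pushforward of the corresponding query in $\model_j$. It is also graphically compatible in the sense of Def.\ref{def:graph-emb-consistency}. The proof then matches each clause of this notion against hypotheses (i) and (ii) of Thm.\ref{thm:emb-causal-sol}. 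I set $\vars^*:=\vars_{\model'}$ and $\model^*:=\model'$, and take $\varphi_j$ to be the variable map of $\boldalpha_j$.

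\textbf{Well-typedness.} Hypothesis (ii), $\relevant_j=\vars_{\model_j}$, makes $\varphi_j$ a total map $\vars_{\model_j}\rightarrow\vars^*$, as Def.\ref{def:mrmp} requires. Without (ii), $\varphi_j$ would only be defined on a subset of $\vars_{\model_j}$, so (ii) is exactly what turns the data of the embeddings into an instance of the problem. The target set $\relevanttarget_j=\varphi_j(\vars_{\model_j})$ may be a proper subset of $\vars^*$. I will note that the union of the $\relevanttarget_j$ need not cover $\vars^*$; uncovered variables are simply unconstrained by the marginal models, and consistency concerns only the covered ones.

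\textbf{Distributional consistency.} Fix $j$ and $\mathbf X',\mathbf Y'\subseteq\relevanttarget_j$. Hypothesis (i) says $e_{\li}(\boldalpha_j)=0$ in Def.\ref{def:emb-consistency}. Since $D$ is a distance or divergence, this gives
\[
P_{\model'}(\mathbf Y'\mid\alpha_{\mathbf X'}[\li(\varphi_j^{-1}(\mathbf X'))])=\alpha_{\mathbf Y'}\bigl[P_{\model_j}(\varphi_j^{-1}(\mathbf Y')\mid\li(\varphi_j^{-1}(\mathbf X')))\bigr]
\]
for every such pair. This is precisely the required agreement of $\model^*$ with $\model_j$ on the $\li$-queries visible through $\varphi_j$. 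Taking $\mathbf X'=\emptyset$ recovers the marginal $\lone$ distributions, so for $i\geq 1$ the statistical marginals also agree.

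\textbf{Graphical consistency.} Because each $\boldalpha_j$ is an \boldalpha-embedding, Thm.\ref{thm:emd-l2} gives graphical $\ltwo$-consistency, and hence $\lone$-consistency by inclusion of constraints. Thus $\model'$ also respects the structure of every $\model_j$. Since these conditions hold for all $j$ simultaneously within the single model $\model'$, $\model'$ lies in the solution space.

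The main obstacle is the first step, pinning down "consistent with" so that it is neither vacuous nor stronger than what zero embedding error gives. In particular, I must avoid claiming uniqueness, which Rem.\ref{rem:1} explicitly denies. The theorem only asserts membership of $\model'$ in the solution space; the remaining steps are direct unfolding of definitions.
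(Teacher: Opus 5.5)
Your proposal is correct and follows the paper's proof. Like the paper, you take $\vars^*:=\vars_{\model'}$, $\model^*:=\model'$ and $\varphi_j$ as each embedding's variable map (total because $\relevant=\vars_{\model_j}$), and read off consistency from the vanishing $\li$-embedding error; you are more explicit than the paper in fixing what ``consistent with'' means and in adding the graphical clause via Thm.~\ref{thm:emd-l2}, but the argument is the same unfolding of definitions.
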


Notice that in the theorem we do not specify the level of consistency $\mathcal{L}_i$; \lone-consistency provides a solution to a statistical marginal problem, $\mathcal{L}_3$-consistency a solution to a causal marginal problem  as proposed by \citet{causal-marginal-problem}, and an $\mathcal{L}_2$-consistency a solution to a causal marginal problem only up to $\mathcal{L}_2$.
For an example of the use of embeddings in the multi-resolution marginal problem see the following.

\begin{example}[Multi-Resolution Marginal Problem]\label{ex:mrmp}
Recall the motivating example for causal embeddings in Ex.\ref{ex:ecosystem}-\ref{ex:finished-maps}. Models $\model_1$ and $\model_2$ (Fig.\ref{fig:fig-detailed-models}) differ in resolution for overlapping variables: whereas $\model_1$ counts subspecies of deer separately $\model_2$ only has a total count of all deer, and similarly for the predator variables. 
This does not permit for a causal marginal problem to be defined, instead this needs to be framed in the multi-resolution causal marginal problem. 
Given the emdeddings $\boldalpha_1$ and $\boldalpha_2$, we want to find a specification of an SCM $\model'$ consistent with $\model_1$ and $\model_2$. Variable maps $\varphi_i$ of the embeddings have been defined earlier, while the maps $\alpha_{V'}: \range(\varphi^{-1}(V')) \rightarrow \range(V')$ for $V' \in \relevanttarget$ can be defined simply as the sum of the pre-images. For $\boldalpha_1$ we define:
\begin{align*}
    \alpha_{\text{Deer}} &:= \text{Deer},\\
    \alpha_{\text{Humans}} &:= \text{Humans,}\\
    \alpha_{\text{Squirrels}} &:= \text{Squirrels.}
\end{align*}
and similarly for $\boldalpha_2$:
\begin{align*}
    \alpha_{\text{Deer}} &:= \text{Fallow Deer} + \text{Red Deer},\\
    \alpha_{\text{Predators}} &:= \text{Eagles} + \text{Wolves,}\\
    \alpha_{\text{Squirrels}} &:= \text{Squirrels.}
\end{align*}

It is immediate now to define a $\ltwo$-consistent SCM that is a solution to the multi-resolution marginal problem.
\end{example}

We now highlight further connections of our problem to previous marginal problems in the limit cases of reducing a multi-resolution problem to a single-resolution or using an identity embedding
(proofs in App.\ref{app:proof-reduction} and App.\ref{app:proof-id-emb-sol}):

\begin{lemma}[Multi-resolution Marginal Problem Reduction]\label{lem:reduction}
    Given a multi-resolution marginal problem, the application of \boldalpha-embeddings reduces it to a single-resolution marginal problem. 
\end{lemma}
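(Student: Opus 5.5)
We prove Lem.\ref{lem:reduction} constructively: we push every marginal model forward through its embedding, so that all models end up at the common resolution of $\vars^*$. The plan has three steps.

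\emph{Step 1: build the pushed-forward marginal models.} Start from a multi-resolution marginal problem (Def.\ref{def:mrmp}) with models $\model_1,\dots,\model_n$ and maps $\varphi_i:\vars_{\model_i}\rightarrow\vars^*$. Each $\varphi_i$ is surjective onto its image $\relevanttarget_i:=\varphi_i(\vars_{\model_i})\subseteq\vars^*$. Equip it with surjective range maps $\alpha_{V'}$ for $V'\in\relevanttarget_i$. This gives a non-surjective \alphabs{} (Def.\ref{def:Non-Sur-alpha-abs}) $\boldalpha_i$.

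For each $i$ we then build a high-level marginal model $\model_i'$ over $\relevanttarget_i$, as follows.
\begin{itemize}
\item Its graph is the CDAG obtained from $G_{\model_i}$ by clustering along $\varphi_i$.
\item Its distributions are the pushforwards $\alpha_{\mathbf{Y}'}[P_{\model_i}(\varphi_i^{-1}(\mathbf{Y}')\mid\li(\varphi_i^{-1}(\mathbf{X}')))]$.
\end{itemize}
For the existence of an SCM $\model_i'$ realizing these distributions we invoke Thm.\ref{thm:graph-implies-func}. This requires that $\varphi_i$ be graphically \ltwo-consistent. Taking the clustered graph as the target makes the CDAG condition of Def.\ref{def:alpha-emb} hold by construction, and Thm.\ref{thm:emd-l2} then gives graphical \ltwo-consistency. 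The result is that each $\boldalpha_i:\model_i\rightarrow\model_i'$ is an \boldalpha-embedding with zero error (Def.\ref{def:emb-consistency}).

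\emph{Step 2: show that the new models form a single-resolution problem.} The models $\model_1',\dots,\model_n'$ all have endogenous variables that are subsets of the common set $\vars^*$. Overlapping variables $\relevanttarget_i\cap\relevanttarget_j$ are therefore literally the same high-level variables, with the same ranges $\range(V')$. This is exactly an instance of Def.\ref{def:causal-marginal-problem}, with union $\bigcup_i\relevanttarget_i$, which equals $\vars^*$ when the $\varphi_i$ jointly cover $\vars^*$.

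\emph{Step 3: show that solutions correspond.} Take any joint $\model^*$ over $\vars^*$ consistent with all $\model_i'$ in the single-resolution sense. Its marginals and interventional distributions on $\relevanttarget_i$ agree with those of $\model_i'$, and hence with the pushforwards of $\model_i$. So each composite map $\model_i\rightarrow\model^*$ is a consistent embedding, and by Thm.\ref{thm:emb-causal-sol} $\model^*$ solves the multi-resolution problem. The converse direction, from a multi-resolution solution to a single-resolution one, follows by reading the consistency equations backwards.

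\emph{Main obstacle.} I expect the hardest step to be Step 1: ensuring that a well-defined SCM $\model_i'$ exists with the clustered graph and exactly the pushforward \ltwo-distributions. This matters because clustering can break the well-definedness of interventions, since intervening on $\alpha_{V'}$ does not determine a unique low-level intervention. I would handle this by relying on Thm.\ref{thm:graph-implies-func} as a black box. Failing that, I would restrict to the distributions actually needed, defining $\model_i'$ only up to the chosen layer $\li$, and note that the reduction is stated up to that layer.
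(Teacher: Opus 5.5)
Your Step 2 is, in substance, the paper's whole proof. The paper takes the embeddings $\boldalpha_i:\model_i\rightarrow\model'$ into a common $\model'$ as given. It then notes that, because every $\relevanttarget_i\subseteq\vars_{\model'}$, the transformed models $\boldalpha_i(\model_i)$ agree in representation and resolution on overlapping variables, which leaves a single-resolution marginal problem.

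The genuine problem lies in Step 1, which you add on top. You fix surjective range maps $\alpha_{V'}$ first and then cite Thm.~\ref{thm:graph-implies-func} to conclude that $\boldalpha_i:\model_i\rightarrow\model_i'$ has zero error. That theorem only asserts the existence of \emph{some} specification of $\model'$ together with \emph{some} embedding having variable map $\varphi$. The range maps are existentially quantified, so it cannot certify the ones you chose, and for prescribed range maps the claim is false.

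Here is a counterexample. Let $X_1,X_2$ be independent binary roots with $Y:=X_1$, and set $\varphi(X_1)=\varphi(X_2)=X'$, $\varphi(Y)=Y'$, $\alpha_{X'}=X_1+X_2$, and $\alpha_{Y'}$ the identity. The clustered graph $X'\rightarrow Y'$ makes this a valid $\boldalpha$-embedding. Yet $do(X_1=1,X_2=0)$ and $do(X_1=0,X_2=1)$ both map to $X'=1$ while forcing $Y'=1$ and $Y'=0$ respectively. So no $\model_i'$ has zero $\ltwo$-error.

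This is exactly the obstacle you anticipate, and neither of your remedies removes it. Using the theorem as a black box escapes it only by choosing different range maps. Doing so separately for each $i$ could even give a shared $V'\in\relevanttarget_i\cap\relevanttarget_j$ different ranges, which would undo Step 2. Truncating to layer $\li$ only salvages the joint observational pushforward $\alpha_{\#}P_{\model_i}$, not the conditional or interventional families you write down.

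Step 3 inherits the gap. Your ``hence'', from agreement with $\model_i'$ to agreement with the pushforwards of $\model_i$, is precisely the zero-error claim. Moreover, even matching distributions would not make the composite $\model_i\rightarrow\model^*$ an $\boldalpha$-embedding. That additionally requires the projection of $G_{\model^*}$ over $\relevanttarget_i$ to be a CDAG of $G_{\model_i}$, which distributional agreement does not provide (cf.\ Thm.~\ref{thm:func-not-implies-graph}).

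The repair follows the lemma's own wording, which presupposes the embeddings. Take the $\boldalpha_i$ as given instead of constructing them, and let Step 2 be the proof. If you also want solutions to correspond, assume $\li$-consistency of each $\boldalpha_i$ (and the graphical condition on $\model^*$) as hypotheses, as Thm.~\ref{thm:emb-causal-sol} does, rather than trying to derive them.
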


\begin{lemma}[Identity Embeddings as Solution to the Marginal Problem]\label{lem:id-emb-sol} 
Let $\mathbf{M}:=\{\model_1,\dots,\model_n\}$ be a set of $n$ SCMs and $\mathbf{A}:=\{\boldalpha_1,\dots,\boldalpha_n\}$ be a set of $n$ \boldalpha-embeddings such that $\boldalpha_j:\model_j\rightarrow\model'$ embeds into the same $\model'$ for all $j$. $\model'$ is a solution $\model^*$ of the single-resolution marginal problem if for all $\boldalpha_j\in\mathbf{A}$: (i) $\boldalpha_j$ is \li-consistent, (ii) $\alpha_{\vars'}$ and $\varphi$ are identity maps, and (iii) the set of relevant variables contains all variables $\relevant = \vars_{\model_j}$.
\end{lemma}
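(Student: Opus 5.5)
The plan is to reduce Lem.\ref{lem:id-emb-sol} to Thm.\ref{thm:emb-causal-sol}, and then use Lem.\ref{lem:reduction} to identify the resulting multi-resolution problem with the ordinary single-resolution marginal problem of Def.\ref{def:causal-marginal-problem}.

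First I will observe that the hypotheses of the lemma contain those of Thm.\ref{thm:emb-causal-sol}. Conditions (i) and (iii) are literally the hypotheses (i) and (ii) of that theorem. Hence $\model'$ solves the multi-resolution marginal problem of Def.\ref{def:mrmp}, with $\vars^*=\vars_{\model'}$ (or the union of the relevant target sets $\relevanttarget_j$) and mappings $\varphi_j$.

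The second step uses hypothesis (ii). Since every $\varphi_j$ is the identity on $\relevant_j=\vars_{\model_j}$, we have $\relevanttarget_j=\vars_{\model_j}$. In other words, each marginal model's variables appear verbatim among the variables of $\model'$, and $\vars_{\model_i}\cap\vars_{\model_j}$ is now a genuine intersection of shared variables. This is exactly the setting of Def.\ref{def:causal-marginal-problem}, and it is the instance the reduction of Lem.\ref{lem:reduction} produces when the embeddings are trivial.

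I will then unfold what $\li$-consistency means when $\alpha$ is an identity. The pushforward under an identity map is the identity on distributions, so the error in Def.\ref{def:emb-consistency} becomes
\[ \max_{\mathbf X,\mathbf Y\subseteq\vars_{\model_j}} D\bigl(P_{\model'}(\mathbf Y\mid\li(\mathbf X)),\,P_{\model_j}(\mathbf Y\mid\li(\mathbf X))\bigr)=0 . \]
So every $\li$-query of $\model_j$ is reproduced by $\model'$. Together with graphical $\ltwo$-consistency, which follows from Thm.\ref{thm:emd-l2}, this is precisely the consistency with $\model_1,\dots,\model_n$ demanded of a joint model $\model^*$.

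The main obstacle is that $\model'$ may contain variables outside $\bigcup_j\vars_{\model_j}$, whereas Def.\ref{def:causal-marginal-problem} asks for a model over exactly the union. I would first handle this by taking the SCM projection (Def.\ref{def:SCM-projection}) of $\model'$ onto $\bigcup_j\vars_{\model_j}$, treating the extra variables as exogenous. Marginalizing out exogenised variables leaves all $\li$ distributions over the retained variables unchanged, since hard interventions on retained variables commute with this marginalization. Therefore the projected model is still $\li$-consistent with each $\model_j$, and it is a solution over the required variable set. If one instead reads ``solution'' as allowing auxiliary variables, the argument ends already at the third step.
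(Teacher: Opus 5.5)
Your proposal is correct and is essentially the paper's argument. The paper observes that an identity embedding with $\relevant=\vars_{\model_j}$ leaves $\model_j$ unchanged, so the reduction of Lem.~\ref{lem:reduction} returns exactly the single-resolution problem over $\model_1,\dots,\model_n$, which $\model'$ solves by $\li$-consistency; your detour through Thm.~\ref{thm:emb-causal-sol} and your explicit unfolding of the $\li$-error under identity pushforwards just spell this out. Your extra step for variables of $\model'$ outside $\bigcup_j\vars_{\model_j}$ handles a point the paper silently ignores, and it works provided you use the functional marginalization that substitutes the structural equations of the dropped variables into their descendants. Def.~\ref{def:SCM-projection} alone is not enough, since it only constrains graphs; nor can you literally move a dropped variable with retained ancestors into $\exos$, which would break $\ltwo$ queries (e.g.\ $X\rightarrow W\rightarrow Y$ with $W$ dropped).
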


\section{Merging Datasets}
Embeddings can also be applied to datasets. We propose a simple algorithm (see Alg.\ref{alg:one}), where we collect data from marginal models and we use embeddings to map the data to a single shared representation. As the marginal models are not necessarily defined on all variables, we might obtain a dataset with missing values. This constitute a case of \emph{structured missing data} due to \emph{multi-scale linkage} \citep{mitra2023learning}, and Line 6 in the algorithm calls for a data imputation method to fill in the missing values. See the following two simulated examples as an illustration:

\begin{algorithm}
\caption{Multi-resolution datasets merging}\label{alg:one}
\KwData{Datasets $\mathcal{X}_{\vars_1},...,\mathcal{X}_{\vars_n}$ from $\model_1,...,\model_n$,\\ Embeddings $\boldalpha_{\model_1},\dots,\boldalpha_{\model_n}$ s.t. $\boldalpha_i:\model_i\rightarrow\model'$}
\KwResult{Merged dataset $\mathcal{X}_{\vars'}$}
$\mathcal{X}_{\vars'} \gets [\;\;]$;\\
\For{$i \in \{0,...,n\}$}{
    $\mathcal{X}_{\vars_i'}\gets\alpha_{\vars_i}(\mathcal{X}_{\vars_i})$;\\
    $\mathcal{X}_{\vars'}\gets\left[\mathcal{X}_{\vars'},\mathcal{X}_{\vars_i'}\right]$;
}
$\mathcal{X}_{\vars'}\gets\text{impute}(\mathcal{X}_{\vars'})$
\end{algorithm}

\begin{example}[Merging datasets for increased statistical power]\label{ex:merging}
We continue Ex.\ref{ex:mrmp} and generate datasets of $2000$ and $4000$ samples for the marginal models $\model_1$ and $\model_2$, respectively (see App.\ref{app:data} for details).

After collection, data from the marginal models is transformed into the shared resolution using the embeddings $\boldalpha_1$ and $\boldalpha_2$, allowing the merging of the datasets. We estimate the distribution $\hat{P}(Deer, Squirrels)$ using the marginal and the merged datasets and computing the KL divergence between the estimation $\hat{P}$ and true distribution $P$:
\begin{align*}
    \mathcal{X}_{\model_1}&: D_{KL}(P,\hat{P}) \approx 0.34\\
    \mathcal{X}_{\model_2}&: D_{KL}(P,\hat{P}) \approx 0.77\\
    \mathcal{X}_{\model_1}+\mathcal{X}_{\model_2}&: D_{KL}(P,\hat{P}) \approx 0.22
\end{align*}
Relying on more samples, the merged dataset shows a clear improvement in the estimation of $\hat{P}$.

\begin{example}[Merging datasets to compute distributions undefined in the marginals]\label{ex:merging2}
    Suppose we want to estimate the distribution $\hat{P}(\text{Predators}, \text{Humans})$. This distribution cannot be estimated from the marginal models: $\model_1$ is not accounting for predators, while $\model_2$ ignores humans. However, by aggregating the data and imputing missing values we can estimate this quantity of interest (see App.\ref{app:imputation}).
\end{example}
\end{example}

%% file: Sections/5-Discussion.tex
\section{Conclusion}
In this work we have introduced causal embeddings as a generalization of abstractions, and extended the notion of consistency. We illustrated with examples Ex.\ref{ex:mrmp} and Ex.\ref{ex:merging} that embeddings are not just a theoretical novelty, but serve a practical use: they express a multi-resolution marginal problem, they enable its reduction to the standard marginal problem in both the statistical and causal setting, and they allow for the merging of datasets or causal models with overlapping variables at different resolutions.

Future work might extend embedding from the \boldalpha-abstraction framework to the $\tau$-abstraction (relying on the relations in \citet{schooltink2025aligning}), and explore algorithms for learning embeddings, thus enabling solving multi-resolution marginal problems and merging datasets.

%% file: Sections/A-Appendix.tex
\section{Supplementary Definitions}
\subsection{Latent Structure Projections}\label{app:latent}
Here we will introduce latent structure projections \citep{pearl1995projection}, a seminal work in graphical models with applications in graphical abstraction. In the main paper we show that through a combination of abstraction and the extension of latent structure projections to SCMs we obtain causal embeddings.

First we define latent structure models, as graphical causal models over a set of variables, some of which are unobservable:
\begin{definition}[Latent Structure Model]\label{def:latent-model}
    Given a set of causal variables $U$, a latent structure $L:=\langle D,O\rangle$ is a tuple of a DAG $D$ with vertices $U$, and a set $O\subseteq U$ of observable variables.
\end{definition}

Consider the case where some of the observable variables $O$ have become unobservable, one may want to find a new model excluding that variable in the observable set whilst preserving all conditional dependencies implied by the original. Projections allow for reasoning about such cases:
\begin{definition}[Latent Structure Projection]\label{def:latent-projection}
    A latent structure $L_{[O]}:=\langle D_{[O]},O\rangle$ is a projection of another latent structure $L$ if, and only if, the following holds:
    \begin{enumerate}
        \item Every unobservable variable of $D_{[O]}$ is a parentless common cause of exactly two non-adjacent observable variables.
        \item For every stable distribution $P$ generated by $L$ there exists a stable distribution $P'$ generated by $L_{[O]}$ such that the independencies over the variables $O$ implied by $P_{[O]}$ equal to those implied by $P'_{[O]}$.
    \end{enumerate}
\end{definition}

It has been shown that latent projections are consistent in identifiablity and preserve topological ordering. Informally, given a model $L$ and a its projection $L_{[\vars']}$ over a subset of variables $\vars' \subseteq \vars$ this implies that any causal quantities over $\vars'$ representable in $L_{[\vars']}$ are equally representable in $L$.  

\newpage
\section{Proofs}
\subsection{Proof of Lemma \ref{lem:eq-defs}}\label{app:proof-eq-defs}
\paragraph{Lemma \ref{lem:eq-defs}} (Equivalence Def.\ref{def:alpha-emb} and Def.\ref{def:alpha-emb-alt}).
\textit{Def.\ref{def:alpha-emb} of the \boldalpha-embedding in terms of projections is equivalent to Def.\ref{def:alpha-emb-alt} of the \boldalpha-embedding in terms of explicit graphical constraints. }
\begin{proof}
    In this proof we base ourselves mostly in the results of \citet{schooltink2025aligning}.
    
    First, recall conditions 1. and 2. of Def.\ref{def:alpha-emb-alt}:
    \begin{enumerate}
        \item a mediated adjacency $X'\rightsquigarrow Y'$ wrt $\relevant$ is in $\model'$ iff there exists $X \in \varphi^{-1}(X'), Y \in \varphi^{-1}(Y')$ such that there is a mediated adjacency  $X\rightsquigarrow Y$ wrt $\relevanttarget$.
        \item a mediated confounder $X'\medconfarrow Y'$ wrt $\relevant$ is in $\model'$ iff there exists $X \in \varphi^{-1}(X'), Y \in \varphi^{-1}(Y')$ such that there is a mediated confounder $X\medconfarrow Y$ wrt $\relevanttarget$.
    \end{enumerate}
    
    Importantly, (i) the $\ltwo$-distribution (in)equalities implied by a mediated edge $X\rightsquigarrow Y$ wrt $\relevant$ are equal to those implied by a directed edge $X\rightarrow Y$, given $X,Y\in \relevant$. Similarly, a mediated confounding edge $X\medconfarrow Y$ wrt $\relevant$ implies the same $\ltwo$ distribution constraints as a regular confounding edge $X\medconfarrow Y$ given $X,Y\in \relevant$. 

    Secondly, (ii) the following two conditions are known to together enforce graphically \ltwo-consistency:
    
    \begin{enumerate}
        \item an adjacency $X'\rightarrow Y'$ is in $\model'$ iff there exists $X \in \varphi^{-1}(X'), Y \in \varphi^{-1}(Y')$ such that there is a mediated adjacency  $X\rightsquigarrow Y$ wrt $\relevanttarget$.
        \item a confounder $X'\confarrow Y'$  is in $\model'$ iff there exists $X \in \varphi^{-1}(X'), Y \in \varphi^{-1}(Y')$ such that there is a mediated confounder $X\medconfarrow Y$ wrt $\relevanttarget$.
    \end{enumerate}
    
    Therefore, by a combination of (i) and (ii) it follows that conditions 1. and 2. from Def.\ref{def:alpha-emb-alt} must entail graphical $\ltwo$ consistency.

    Similarly, Def.\ref{def:alpha-emb} is necessarily graphically $\ltwo$ consistent, as the CDAGs are known to be graphically $\ltwo$-consistent \citep{anand2023causal} and the projections by definition preserve \ltwo-distributional (in)equalities.

    Thus, both Def.\ref{def:alpha-emb} and Def.\ref{def:alpha-emb-alt} define an \boldalpha-embedding to be a non-surjective \alphabs{} such that \boldalpha{} is graphically $\ltwo$-consistent.
\end{proof}

\subsection{Proof of Theorem \ref{thm:graph-implies-func}}\label{app:graph-implies-func}
\paragraph{Theorem 3}(Graphical $\ltwo$-consistent map $\Rightarrow$ Functional $\ltwo$-consistent embedding)\textbf{.}
\textit{Given a graphically \ltwo-consistent $\varphi:\relevant\rightarrow\relevanttarget$ with $\relevant\subseteq\vars_{\model}$ and $\relevanttarget\subseteq\vars_{\model'}$ there exists a specification of $\model'$ and an \boldalpha-embedding $\boldalpha:\model\rightarrow\model'$ with variable map $\varphi$ s.t. $\boldalpha$ is functionally \ltwo-consistent.}

\begin{proof}
    Let $\model$ be a SCM over the endogenous variables $\vars_\model$, $\model'$ an SCM over the endogenous variables $\vars_{\model'}$, and $\varphi:\relevant\rightarrow\relevanttarget$ such that $\relevant\subseteq\vars_{\model}$ and $\relevanttarget\subseteq\vars_{\model'}$.
    
    Note that $\varphi$ is graphically $\ltwo$ consistent and thus the causal graph $G_\relevant$ over $\relevant$ to the causal graph $G_\relevanttarget$ over $\relevanttarget$ is necessarily a CDAG \citep{anand2023causal, schooltink2025aligning}. Therefore, given an SCM $\model_{[\relevant]}$ over $\relevant$ and an SCM $\model'_{[\relevanttarget]}$ over $\relevanttarget$, there exists a specification of $\model'_{[\relevanttarget]}$ such that $\model'_{[\relevanttarget]}$ is functionally \ltwo-consistent with $\model_{[\relevant]}$. This leaves to show that (i) there is a model specification of $\model_{[\relevant]}$ encoding the same distributions as $\model$ and (ii) there is a specification of $\model'$ encoding the same distributions as $\model'_{[\relevanttarget]}$.

    (i) is given by the results of \citet{schooltink2025aligning} Thm.23, and (ii) is satisfied by the following construction of functions $\mathcal{F}_{\model}$:

    We distinguish three types of endogenous variables in $\vars_{\model'}$ for which we specify a function respecting the causal graph:
    \begin{enumerate}
        \item Variables $V$ s.t. $V\notin \relevanttarget$ and $\forall S\in \relevanttarget, V\notin An(S)$. As these variables do not influence distributions of interest, arbitrary functions can be defined.
        \item Variables $V$ s.t. $V\notin \relevanttarget$ and $\exists S\in \relevanttarget, V\in An(S)$. These variables that do influence some $S\in\relevanttarget$, the function determining their value is taking the cartesian product of the parent variables including exogenous variables.
        \item Variables $V$ s.t. $V\in \relevanttarget$. These variables must produce same distributions as given in $\model'_{[\relevanttarget]}$. By preservation of causal paths through projection the parents the function $f_V$ must have access to the values of all its parents in $\model_{[\relevanttarget]}'$ either because the variable is in the signature of $f_V$, or if there is a mediated adjacency wrt $\relevanttarget$ there exist a variable in the signature which carries its value (since all intermediate variables take cartesian products of the parents). Thus allowing the specification of the function $f_V$ in $\model'$ to be equal to that in $\model_{[\relevanttarget]}$.  
    \end{enumerate}
    Specifically, this construction additionally allows for preservation of mediated confounding, as exogenous parents are also preserved in intermediate variables. Thus it follows that this model $\model'$ can represent the exact same causal effects as $\model_{[\relevanttarget]}$, and generate the same distributions.

    Showing that given an SCM $\model$ over the endogenous variables $\vars_\model$,  an SCM $\model'$ over the endogenous variables $\vars_{\model'}$, and $\varphi:\relevant\rightarrow\relevanttarget$ such that $\relevant\subseteq\vars_{\model}$ and $\relevanttarget\subseteq\vars_{\model'}$, we can always construct a specification for $\model'$ such that there exists a functionally $\ltwo$-consistent \boldalpha-embedding $\boldalpha:\model\rightarrow\model'$.
\end{proof}

\subsection{Proof of Theorem \ref{thm:func-not-implies-graph}}\label{app:func-not-implies-graph-proof}
\paragraph{Theorem 4}(Functional $\ltwo$-consistent non-surjective abstraction $\not\Rightarrow$ embedding)\textbf{.}
\textit{A non-surjective \ltwo-consistent \boldalpha-abstraction does not necessarily imply an \boldalpha-embedding.}
\begin{proof}
    For this proof we will show that one can create a functionally $\ltwo$-consistent non-surjective abstraction, that is not graphically $\ltwo$-consistent. Thus by Thm.\ref{thm:emd-l2} the proposed abstraction is not an \boldalpha-embedding.

    We will define a \boldalpha-abstraction $\boldalpha:\model\rightarrow\model'$. Let $\model$ and $\model'$ be SCMs with the following specification:
    
    \begin{minipage}{.45\linewidth}
        \begin{align*}
            &\model:\\
            \vars&:=\{X,Y,Z\}, \text{ with }\begin{cases}
                 \range(X)&=\{0,2,4\}\\ \range(Y)&=\{0,1\}\\ \range(Z)&=\{0,1,2,3,4,5\}
            \end{cases}\\
            \exos&:=\{U_X, U_Y\}\\
            \funcs&:=\begin{cases}
                f_X(U_X)&=U_X\\
                f_Y(U_Y)&=U_Y\\
                f_Z(X,Y)&=X+Y
            \end{cases}\\
            \dists&:=\begin{cases}
                U_X &\sim \mathcal{U}\{0,2,4\}\\
                U_Y &\sim B(0.5)
            \end{cases}
        \end{align*}
    \end{minipage}
    \hfill
    \vline
    \hfill
    \begin{minipage}{.45\linewidth}
        \begin{align*}
            &\model':\\
            \vars&:=\{X',Y',Z'\}, \text{ with }\begin{cases}
                 \range(X')&=\{0,2,4\}\\ \range(Y)&=\{0,1\}\\ \range(Z)&=\{0,1\}
            \end{cases}\\
            \exos&:=\{U_{X'}, U_{Y'}\}\\
            \funcs&:=\begin{cases}
                f_{X'}(U_{X'})&=U_{X'}\\
                f_{Y'}(U_{Y'})&=U_{Y'}\\
                f_{Z'}(Y')&=Y'
            \end{cases}\\
            \dists&:=\begin{cases}
                U_{X'} &\sim \mathcal{U}\{0,2,4\}\\
                U_{Y'} &\sim B(0.5)
            \end{cases}
        \end{align*}
    \end{minipage}

    \vspace{1em}
    Then define $\boldalpha$ with $\relevant=\vars_{\model}$ and $\relevanttarget=\vars_{\model'}$ by the maps:
    \begin{align*}
        \varphi&:=\begin{cases}
            X&\mapsto X'\\
            Y&\mapsto Y'\\
            Z&\mapsto Z'
        \end{cases}, &
        \alpha_{X'}&:=\begin{cases}
            0&\mapsto 0\\
            2&\mapsto 2\\
            4&\mapsto 4
        \end{cases},&
        \alpha_{Y'}&:=\begin{cases}
            0&\mapsto 0\\
            1&\mapsto 1
        \end{cases},&
        \alpha_{Z'}&:=\begin{cases}
            0&\mapsto 0\\
            1&\mapsto 1\\
            2&\mapsto 0\\
            3&\mapsto 1\\
            4&\mapsto 0\\
            5&\mapsto 1
        \end{cases}.\\
    \end{align*}

    One can easily verify $\boldalpha$ is functionally \ltwo-consistent. Now we draw the DAGs entailed by $\model$ and $\model'$:
    \begin{center}
        \begin{tikzpicture}
            \node (M) at (1,0.75) {$\model:$};
            \node (X) at (0,0) {$X$};
            \node (Y) at (0,-2) {$Y$};
            \node (Z) at (2,-1) {$Z$};

            \draw[->] (X) -- (Z);
            \draw[->] (Y) -- (Z);
        \end{tikzpicture}
        \hspace{5em}
        \begin{tikzpicture}
            \node (M) at (1,0.75) {$\model':$};
            \node (X) at (0,0) {$X'$};
            \node (Y) at (0,-2) {$Y'$};
            \node (Z) at (2,-1) {$Z'$};

            \draw[->] (Y) -- (Z);
        \end{tikzpicture}
    \end{center}
    Notice that the edge $X'\rightarrow Z'$ is absent as $X'$ is not in the signature of the function $f_{Z'}$. Thus the algebraic constraint $P(Z'\vert do(X')) = P(Z'\vert do(\emptyset))$ is in $\mathcal{G}^{\ltwo}(G_{[\relevanttarget]})$. In turn, $P(Z\vert do(X)) = P(Z\vert do(\emptyset))$ is in $\mathcal{G}^{\ltwo}(G^{-1}_{[\relevanttarget]})$. However the graph of $\model$ implies the constraint $P(Z\vert do(X)) \neq P(Z\vert do(\emptyset))$, and thus $$\mathcal{G}^{\ltwo}\left(G_{\left[\relevanttarget\right]}^{-1}\right) \not\subseteq \mathcal{G}^{\ltwo}\left(G^{ }_{\left[\relevant\right]}\right).$$

    Showing a functional $\ltwo$-consistent non-surjective abstraction does not necessarily imply graphical \ltwo-consistency, and consequently not an $\boldalpha$-embedding. 
\end{proof}

\subsection{Proof of Theorem \ref{thm:emb-causal-sol}}\label{app:emb-causal-sol-proof}
\paragraph{Theorem 5}(Consistent Embeddings as Solution to the Multi-Resolution Causal Marginal Problem)\textbf{.}
\textit{Let $\mathbf{M}:=\{\model_1,\dots,\model_n\}$ be a set of $n$ SCMs and $\mathbf{A}:=\{\boldalpha_1,\dots,\boldalpha_n\}$ be a set of $n$ \boldalpha-embeddings such that $\boldalpha_j:\model_j\rightarrow\model'$ embeds into the same $\model'$ for all $j$. $\model'$ is a solution $\model^*$ of the multi-resolution marginal problem if for all $\boldalpha_j\in\mathbf{A}$: (i) $\boldalpha_j$ is $\mathcal{L}_i$-consistent and (ii) the set of relevant variables contains all variables $\relevant = \vars_{\model_j}$.}

\begin{proof}
    A solution to the multi-resolution causal marginal problem requires 
    \begin{itemize}
        \item[(i)] a mapping $\varphi_j:\vars_{\model_j}\rightarrow\vars^*$ from the variables of the low-level models into a shared set of variables $\vars^*$,
        \item[(ii)] a model $\model^*$ over the set of shared variables $V^*$,
        \item[(iii)] the model $\model^*$ to be consistent with all $\model_j\in\mathbf{M}$.
    \end{itemize}

    Requirement (i) is satisfied, since all model $\model_j\in\mathbf{M}$ embed into $\model'$ and thus have a mapping $\varphi_j:\vars_{\model_j}\rightarrow\vars_{\model'}$ making $\vars_{\model'}$ the set of shared variables $\vars^*$. (ii) is then immediately satisfied as $\model'$ is the SCM over $\vars^*$, thus $\model' =\model^*$. Finally, (iii) is satisfied by all embeddings being consistent.

    Therefore, $\model'$ is a solution $\model^*$ to the multi-resolution marginal problem at the resolution of $\vars^*$.
\end{proof}

\subsection{Proof of Lemma \ref{lem:reduction}}\label{app:proof-reduction}
\paragraph{Lemma 6}(Multi-resolution Marginal Problem Reduction)
\textit{Given a multi-resolution marginal problem, the application of \boldalpha-embeddings reduce it to a single-resolution causal marginal problem. }
\begin{proof}
    Let $\mathbf{M}:=\{\model_1,\dots,\model_n\}$ be a set of $n$ SCMs and $\mathbf{A}:=\{\boldalpha_1,\dots,\boldalpha_n\}$ be a set of $n$ \boldalpha-embeddings such that $\boldalpha_i:\model_i\rightarrow\model'$ embeds into the same $\model'$ for all $i$. For all $\model_i\in\mathbf{M}$ the application $\boldalpha_i(\model_i)$ applies the maps $\varphi:\relevant\rightarrow\relevanttarget$ and $\alpha_{V'}:\range(\varphi^{-1}(V'))\rightarrow\range(V')$ for all $V'\in\relevanttarget$. Since $\relevanttarget\subseteq\vars_{\model'}$ and $\vars_{\model'}$ is the same for all embeddings $\boldalpha_i$, the transformed models $\boldalpha_i(\model_i)$ necessarily have equal representations and resolutions for any overlapping variables. Thus reducing to the single-resolution causal marginal problem of finding a model $\model'$ consistent with marginal models $\boldalpha_1(\model_1),...,\boldalpha_n(\model_n)$ with possible non-empty overlapping variables.
\end{proof}

\subsection{Proof of Lemma \ref{lem:id-emb-sol}}\label{app:proof-id-emb-sol}
\paragraph{Lemma 7}(Identity Embeddings as Solution to the Marginal Problem)
\textit{Let $\mathbf{M}:=\{\model_1,\dots,\model_n\}$ be a set of $n$ SCMs and $\mathbf{A}:=\{\boldalpha_1,\dots,\boldalpha_n\}$ be a set of $n$ \boldalpha-embeddings such that $\boldalpha_j:\model_j\rightarrow\model'$ embeds into the same $\model'$ for all $j$. $\model'$ is a solution $\model^*$ of the single-resolution marginal problem if for all $\boldalpha_j\in\mathbf{A}$: (i) $\boldalpha_j$ is \li-consistent, (ii) $\alpha_{\vars'}$ and $\varphi$ are identity maps, and (iii) the set of relevant variables contains all variables $\relevant = \vars_{\model_j}$.}
\begin{proof}
    First note that an $\boldalpha$-embedding with $\relevant = \vars_{\model_j}$ and $\alpha_{\vars'}$, $\varphi$ identity maps does not alter the model if applied to it: $\boldalpha(\model) = \model$. Then as a consequence of Lem.\ref{lem:reduction} it follows that if all $\boldalpha_j\in\textbf{A}$ are such identity embeddings the problem is the same as finding a specification $\model'$ consistent with models $\model_1,...,\model_n$ as in the single-resolution marginal problem.
\end{proof}

\newpage
\section{Examples}
\subsection{Non-Uniqueness of Distributions given Consistent Embeddings -- Remark \ref{rem:1}}\label{app:rem-1}
Assume we have two SCMs $\model_1,\model_2$ with respective graphs $X\rightarrow Y$ and $Y\rightarrow Z$, with all variables binary: $X,Y,Z\in\{0,1\}$. We then define an embedding into an SCM $\model'$ over the variables $X,Y,Z$ s.t. all maps of the embeddings are identity maps, as shown in the following diagram:
\begin{center}
    \begin{tikzpicture}
        \node (M1) at (-.5,2.75) {$\model_1$:};
        \node (X1) at (-1.5,2) {$X$};
        \node (Y1) at (.5,2) {$Y$};
        \draw[->] (X1) -- (Y1);
        \node (M2) at (3.5,2.75) {$\model_2$:};
        \node (Y2) at (2.5,2) {$Y$};
        \node (Z2) at (4.5,2) {$Z$};
        \draw[->] (Y2) -- (Z2);
    
        \node (MP) at (-1,-.2) {$\model':$};
        \node (X) at (0,0) {$X$};
        \node (Y) at (1.5,0) {$Y$};
        \node (Z) at (3,0) {$Z$};
    
        \draw[->] (X) -- (Y);
        \draw[->] (X) to[in=270-45, out=-45] (Z);
        \draw[->] (Y) -- (Z);
    
        \node[orange] (a1) at (-1.75,1.2) {$\boldalpha_1$:};
        \node[blue]   (a2) at (1.5,1.2) {$\boldalpha_2$:};
        \draw[|->, orange] (X1) -- node[midway,left] {id.} (X);
        \draw[|->, orange] (Y1) -- node[midway,left] {id.} (Y);
        \draw[|->, blue]   (Y2) -- node[midway,right]{id.} (Y);
        \draw[|->, blue]   (Z2) -- node[midway,right]{id.} (Z);
    \end{tikzpicture}
\end{center}

We will show through a concrete example that \lone-consistency of both embeddings does not mean all $\lone$ distributions of $\model'$ are uniquely determined: First, if the embeddings {\color{orange}$\boldalpha_1$}, {\color{blue}$\boldalpha_2$} are \lone-consistent, and the maps $\varphi:\vars\rightarrow\vars'$ and $\alpha_{\vars'}:\range(\vars)\rightarrow\range(\vars')$ are identities all $\lone$ distributions of $\model_1$ and $\model_2$ must be exactly equal to those distributions in $\model'$. So the following distributions are fixed for $\model'$:
$$P(X), P(Y), P(Z), P(X\vert Y), P(Y\vert X), P(Y\vert Z), P(Z\vert Y), P(Y,X), P(Y,Z).$$

In the following we will show that this not enforce uniqueness of the $\lone$ distribution $P(Z\vert X,Y)$. First we set the following distributions that are enforced by consistency of the embeddings:
\begin{align*}
    P(X)&:= \begin{tabular}{R|C|C}
        X: & 0 & 1\\\hline
        P(X): & 0.4 & 0.6
    \end{tabular}\\\\
    P(Y\vert X)&:= \begin{tabular}{R|C|C}
        Y: & 0 & 1\\\hline
        P(Y\vert X=0): & 0.7 & 0.3\\
        P(Y\vert X=1): & 0.4 & 0.6
    \end{tabular}& P(Y)&:=\begin{tabular}{R|C|C}
        Y: & 0 & 1\\\hline
        P(Y): & 0.52 & 0.48
    \end{tabular}\\\\
    P(Z\vert Y)&:= \begin{tabular}{R|C|C}
        Z: & 0 & 1\\\hline
        P(Z\vert Y=0): & 0.6 & 0.4\\
        P(Z\vert Y=1): & 0.3 & 0.7
    \end{tabular}& P(Z)&:=\begin{tabular}{R|C|C}
        Z: & 0 & 1\\\hline
        P(Z): & 0.456 & 0.544
    \end{tabular}
\end{align*}
Now we construct any number of distributions for $P(Z\vert Y,X)$ as long as $\sum_{x\in \range(X)}P(Z\vert Y,X=x)P(X=x) = P(Z\vert Y)$ for example the following two:
\begin{align*}
    P_1(Z\vert Y, X)&:= \begin{tabular}{R|C|C}
        Z: & 0 & 1\\\hline
        P_1(Z\vert Y=0,X=0): & 0.6 & 0.4\\
        P_1(Z\vert Y=0,X=1): & 0.6 & 0.4\\
        P_1(Z\vert Y=1,X=0): & 0.3 & 0.7\\
        P_1(Z\vert Y=1,X=1): & 0.3 & 0.7\\
    \end{tabular},
    & P_2(Z\vert Y, X)&:= \begin{tabular}{R|C|C}
        Z: & 0 & 1\\\hline
        P_2(Z\vert Y=0,X=0): & 0.4 & 0.6\\
        P_2(Z\vert Y=0,X=1): & 0.7\overline{33} & 0.2\overline{66}\\
        P_2(Z\vert Y=1,X=0): & 0.2 & 0.8\\
        P_2(Z\vert Y=1,X=1): & 0.3\overline{66} & 0.6\overline{33}\\
    \end{tabular}.
\end{align*}
Thus showing $\lone$-consistency of all embeddings does not uniquely determine all $\lone$ distributions in $\model'$.
\clearpage

\subsection{Data Generation -- Example \ref{ex:merging}}\label{app:data}
For Ex.\ref{ex:merging} we generate data for both marginal models under the assumption they both measure the same system. As such, for data generation one large causal model is defined over which both models obtain samples. First, recall the marginal models:

\begin{center}
    \resizebox{.6\linewidth}{!}{
    \begin{tikzpicture}[x=-1cm]
            \node (model1) at (-2.1,-1.75) {$\model_2:$};
            \node (Red Deer) at (-1.1,-2.5) {Red Deer};
            \node (Fallow Deer) at (-1.1,-4) {Fallow Deer};
            \node (Rabbits3) at (-1.1,-5.5) {Squirrels};
            \node (Wolves) at (-3.5,-3) {Wolves};
            \node (Eagles) at (-3.5,-5) {Eagles};
    
            \draw[->] (Wolves) -> (Red Deer);
            \draw[->] (Wolves) -> (Fallow Deer.north east);
            \draw[->] (Eagles) -> (Fallow Deer.south east);
            \draw[->] (Eagles) -> (Rabbits3);
            \draw[->] (Fallow Deer) -> (Rabbits3);
            \draw[<->, dashed] (Red Deer) to[in=160, out=-160] (Fallow Deer);
            \draw[<->, dashed] (Fallow Deer) to[in=160, out=-160] (Rabbits3);
            \draw[<->, dashed] (Red Deer) to[in=160, out=-160] (Rabbits3);
    
            \node (model2) at (6,-2.25) {$\model_1:$};
            \node (Rabbits2) at (6.5,-3) {Squirrels};
            \node (Deer2) at (4.6,-4.5) {Deer};
            \node (Humans2) at (8,-4.5) {Humans};
            \node (Berries) at (4.7,-3.2) {Berries};
    
            \draw[->] (Humans2) -> (Deer2);
            \draw[->] (Deer2) -> (Rabbits2);
            \draw[->] (Berries) -> (Deer2);
            \draw[->] (Berries) -> (Rabbits2);
            \draw[->] (Humans2) -> (Rabbits2);
        \end{tikzpicture}
    }
\end{center}

We construct the data generation model as the following SCM. Important to note is that the functions and distributions are mostly chosen to generate reasonable looking distributions, they are not based in any real-world data.
\begin{align*}
    \vars &:= \{\text{Wolves, Eagles, Fallow Deer, Red Deer, Squirrels, Humans, Berries}\}\\
    \exos &:= \{U_{\text{Wolves}},U_{\text{Eagles}},U_{\text{Humans}},U_{\text{Berries}}\}\\
    \funcs &:= \begin{cases}
                F_{\text{Wolves}}(U_{\text{Wolves}}) &= 100 \times \max(U_{\text{Wolves}},0.1)\\
                F_{\text{Eagles}}(U_{\text{Eagles}}) &= 10 \times \max(U_{\text{Eagles}},0.1)\\
                F_{\text{Humans}}(U_{\text{Humans}}) &= 15 \times \max(U_{\text{Humans}},0.1)\\
                F_{\text{Berries}}(U_{\text{Berries}}) &= \max(U_{\text{Berries}},0.1)\\
                F_{\text{Fallow Deer}}(\text{Berries, Wolves, Eagles, Humans}) &= \max(300 \times \text{Berries} - \text{Wolves} - 2 \times\text{Eagles} - 3\times \text{Humans}, 0)\\
                F_{\text{Red Deer}}(\text{Berries, Wolves, Humans}) &= \max(200 \times \text{Berries} - \text{Wolves} - 3\times \text{Humans}, 0)\\
                F_{\text{Squirrels}}(\text{Berries, Eagles, Fallow Deer, Humans}) &= \max(200 \times \text{Berries} - 5\times\text{Eagles} - 4\times \text{Humans} \\& \;\;\;\;\;\;\;\;\;\;\;\;- \frac{1}{2}\times \text{Fallow Deer}, 0)
               \end{cases}\\
    \dists &:= \begin{cases}
                P(U_\text{Wolves}) &\sim N(1,0.20)\\
                P(U_\text{Eagles}) &\sim N(1,0.15)\\
                P(U_\text{Humans}) &\sim N(1,0.25)\\
                P(U_\text{Berries})&\sim N(1,0.25)\\
              \end{cases}
\end{align*}
Giving rise to the ground truth graph:
\begin{center}
    \begin{tikzpicture} [x=-1cm]
        \node (W) at (0,0) {Wolves};
        \node (E) at (0,-1.5) {Eagles};
        \node (RD) at (3,.75) {Red Deer};
        \node (FD) at (3,-.75) {Fallow Deer};
        \node (R) at (3,-2.25) {Squirrels};
        \node (B) at (6,.5) {Berries};
        \node (H) at (6,-2) {Humans};

        \draw[->] (W) -- (RD);
        \draw[->] (W) -- (FD);
        \draw[->] (E) -- (FD);
        \draw[->] (E) -- (R);
        \draw[->] (FD)-- (R);
        \draw[->] (B) -- (RD);
        \draw[->] (B) -- (FD);
        \draw[->] (B) -- (R);
        \draw[->] (H) -- (RD);
        \draw[->] (H) -- (FD);
        \draw[->] (H) -- (R);
    \end{tikzpicture}
\end{center}

For the purpose of avoiding fractional animals, we round the outputs of the functions up to integers.

We generate three datasets by sampling the exogenous variables $\exos$ and using the functions $\funcs$ to find the values of the endogenous variables $\vars$. First, for the dataset for model $\model_1$, we generate 2000 samples, and omit the Berries and Humans variables. For the dataset for model $\model_2$, we generate 4000 samples, assign Deer=Fallow Deer + Red Deer, and drop the variables Wolves and Eagles. And finally, we generate 100000 samples for an ground truth dataset to compare against. All datasets are independently sampled, so there are no samples shared between the datasets. The implementation is available online\footnote{\smaller\url{\gitlink}}.

\subsection{Embeddings for Marginal Problem -- Example \ref{ex:merging2}}\label{app:imputation}
Continuing App.\ref{app:data}, we have two datasets with samples for $\model_1$ and $\model_2$, respectively. We illustrate by example how one can use embeddings to merge the datasets, and through imputation estimate distributions not represented in the marginal models.

We will follow Alg.\ref{alg:one} to construct a single dataset with missing values imputed. For the imputation step we choose a KNN-imputer with $K=2$. The implementation is provided online\footnote{\smaller\url{\gitlink}}. The application of Alg.\ref{alg:one} provides one dataset of 6000 samples, with missing values imputed. Thus allowing for the estimation of $\hat{P}(\text{Humans},\text{Predators})$, a distribution not available in either $\model_1$ or $\model_2$. We provide a visual comparison between the estimation $\hat{P}(\text{Humans},\text{Predators})$ and the true distribution from the ground truth dataset in Fig.\ref{fig:plots}.

\begin{figure}[ht!]
    \centering
    \includegraphics[width=0.45\linewidth]{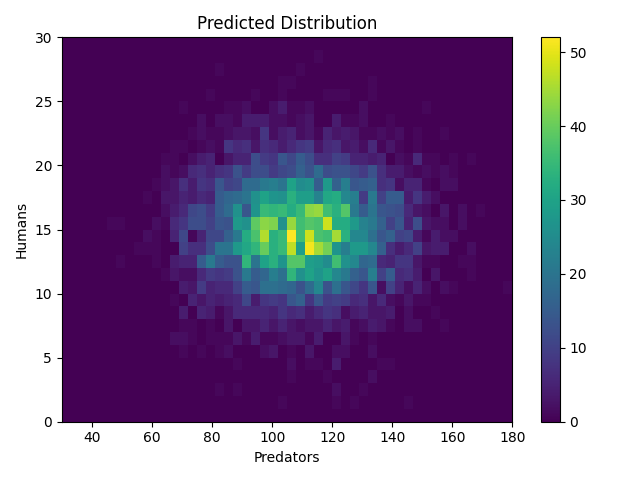}
    \includegraphics[width=0.45\linewidth]{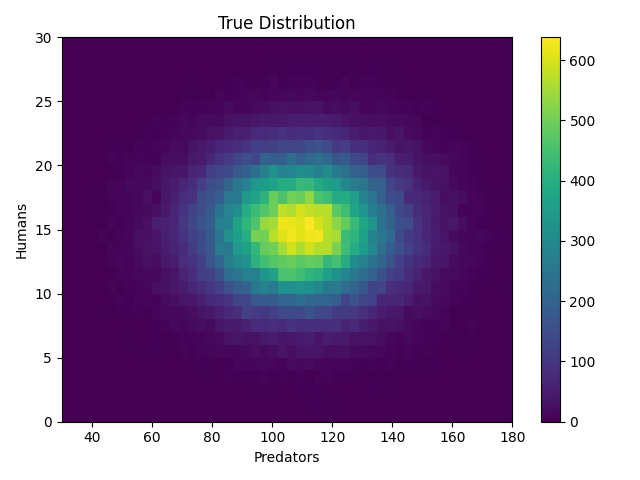}
    \caption{A visual comparison between the predicted estimation $\hat{P}(\text{Humans},\text{Predators})$ and the evaluation distribution $P(\text{Humans},\text{Predators})$. Imputation allows for approximation of distributions otherwise not available.}
    \label{fig:plots}
\end{figure}

Note we choose the KNN-imputer to approximate a solution to the statistical marginal problem in this case. This is a naive and illustrative choice, we do not claim the KNN-imputer to be an optimal approximation. A study of solutions to the marginal problem are out of scope for this work.